\title{Adversarially Robust Stability Certificates can be Sample-Efficient}
\author[1]{Thomas T.C.K.\ Zhang}
\author[2]{Stephen Tu}
\author[3]{Nicholas M.\ Boffi}
\author[4,2]{\authorcr{}Jean-Jacques E.\ Slotine}
\author[1,2]{Nikolai Matni}
\affil[1]{Department of Electrical and Systems Engineering, University of Pennsylvania}
\affil[2]{Google Brain Robotics}
\affil[3]{Courant Institute of Mathematical Sciences, New York University}
\affil[4]{Nonlinear Systems Laboratory, Massachusetts Institute of Technology}
\begin{document}

\setlength{\belowcaptionskip}{-5pt}
\captionsetup{belowskip=-5pt}

\setlength{\abovedisplayskip}{6pt}
\setlength{\belowdisplayskip}{6pt}
\setlength{\abovedisplayshortskip}{6pt}
\setlength{\belowdisplayshortskip}{6pt}

\maketitle

\begin{abstract}%
Motivated by bridging the simulation to reality gap in the context of safety-critical systems, we consider learning adversarially robust stability certificates for unknown nonlinear dynamical systems.  In line with approaches from robust control, we consider additive and Lipschitz bounded adversaries that perturb the system dynamics.  We show that under suitable assumptions of incremental stability on the underlying system, the statistical cost of learning an adversarial stability certificate is equivalent, up to constant factors, to that of learning a nominal stability certificate. Our results hinge on novel bounds for the Rademacher complexity of the resulting adversarial loss class, which may be of independent interest.  To the best of our knowledge, this is the first characterization of sample-complexity bounds when performing adversarial learning over data generated by a dynamical system.  We further provide a practical algorithm for approximating the adversarial training algorithm, and validate our findings on a damped pendulum example.%
\end{abstract}


\section{Introduction}

A challenge to the deployment of modern robotic systems to real-world settings is the overall lack of formal safety guarantees. While controller design for complex robotic systems has received much attention, comparatively less effort has been devoted to verifying the safety of the resulting closed-loop system. Without broadly applicable tools for certifying \textit{a-priori} guarantees, it is difficult to justify deploying these methods in applications where safety is paramount, regardless of the impressive performance that they achieve in simulation or controlled laboratory settings.

An important component of ensuring real-world safety is verifying the stability of a closed-loop system from trajectory data. 
While recent work \citep{boffi2020learning} proposes and analyzes a learning-based approach to this problem,
a fundamental limitation of the prior art
is that 
learning a stability certificate with failure probability of less than e.g.,~$1\%$ for high-dimensional systems requires on the order of tens of thousands of trajectories.  Realistically, such a large amount of trajectory data can only be collected using a simulation environment.
Therefore, in order for a learned certificate to be meaningful for real-world hardware, it is essential for it to be \emph{robust} to modeling errors between simulation and reality, i.e., robust to the so-called sim-to-real gap.  

While bridging the sim-to-real gap has traditionally been addressed via domain randomization \citep{tobin2017domain}, we take inspiration from the robust control literature, and tackle this challenge by developing an approach for \emph{adversarial learning} of stability certificates for dynamical systems.
We show that under suitable conditions on the underlying system, requiring that a learned certificate is robust to adversarial perturbations that \emph{enter the dynamics} carries little additional statistical overhead.
Taking inspiration from \citet{boffi2020learning}, we prove our results by converting the \emph{robust} stability certification problem into an adversarial learning problem, and subsequently bounding the Rademacher complexity of the resulting adversarial loss class.  
To the best of our knowledge, this is the first characterization of sample-complexity bounds when performing adversarial learning over data generated by a dynamical system.
Our results build upon and extend a line of work which shows that underlying system-theoretic properties translate into the difficulty (or ease) of learning over data generated by dynamical systems (see e.g., \citet{tsiamis2020sample, tsiamis2021linear, lee2021adversarial, tu2021sample} and references therein).
We further provide a practical algorithm for approximating the adversarial training algorithm, and show that adversarially trained certificates are robust to various types of model mis-specification on a damped pendulum example.
Our results are presented in continuous time; however, they readily admit discrete time analogues, which are detailed in Appendix~\ref{sec: DT results}.

\subsection{Related Work} 
%
%
Our work draws upon and unifies tools from three areas: (i) learning safety certificates from data, (ii) adversarial robustness, and (iii) statistical learning theory.

\paragraph{Learning safety certificates}
A wide body of work addresses learning Lyapunov \citep{giesl20lyapunov, kenanian19switchedlinear, chen20PWALyap, richards18lyapunov, manek19learningstable, chang19neuralcontrol, ravanbakhsh19lyapunov} and barrier \citep{taylor19cbf, robey20cbf, jin20neural} functions, as well as contraction metrics \citep{singh19learning, ccm_orig, sumeet_icra} and contracting vector fields \citep{sindhwani18vectorfields, elkhadir19teleop} from data.  While the generality and strength of guarantees provided vary (see the literature review of \citet{boffi2020learning} for a detailed exposition), all of the aforementioned works consider nominally specified systems without uncertainty, whereas our approach explicitly considers perturbations that can capture model uncertainty and process noise.

\paragraph{Adversarial robustness}
Traditional approaches \citep{szegedy2013intriguing,madry2017towards,zhang2019theoretically,kurakin2016adversarial} to adversarial learning consider worst-case perturbations to the data during training, i.e., the data is perturbed \emph{after it has been generated}. While such a perturbation model is meaningful in the image classification setting for which adversarial robust training methods were originally developed, it does
not immediately translate to the dynamic setting that we consider, where the adversary may be used to capture model uncertainty or process noise.  In particular, our adversarial model perturbs the dynamical system which generates the data, a perspective that is more in line with traditional robust control methods.  We further show that under suitable stability assumptions on the underlying dynamical system, there is no additional statistical cost to adversarial training, in contrast to results showing that in the traditional setting, adversarial learning algorithms require more data than their nominal counterparts \citep{schmidt2018adversarially}. 

Most directly relevant to our work are adversarial deep reinforcement learning methods which learn policies that are robust to various classes of disturbances, such as adversarial observations \citep{torabi2019generative, gleave2019adversarial}, rewards \citep{fu2017learning, ho2016generative}, direct disturbances to the system \citep{pinto2017robust}, or combinations thereof \citep{lutter2021robust}. 
%
Nevertheless, there remains a paucity of theoretical guarantees on the generalization error, and thus sample-efficiency, of such learned policies.

\paragraph{Statistical learning theory} While such statistical guarantees, to the best of our knowledge, do not exist for adversarial reinforcement learning, the generalization error of an adversarially trained classifier has been studied using uniform convergence \citep{yin2019rademacher, attias2019improved, montasser2019vc}.  
%
While our results also rely on uniform convergence,
our analysis departs from this existing line of work by allowing adversaries to influence dynamical systems.

\section{Problem Framework}\label{sec: problem framework}

\subsection{Nominal Stability Certificates}\label{ssec: nominal stability certificates}

We begin by reviewing the problem setting and results from \citet{boffi2020learning}.  We assume that the underlying dynamical system is a continuous-time, autonomous system of the form $\dotx = f(x)$, where $f$ is continuous and unknown, and that the state $x \in \R^p$ is fully observed. Let $\calX \subset \R^p$ be a compact set and $\calT \subseteq \R^+$ be the maximum interval such that a unique solution $\varphi_t(\xi)$ exists for all times $t \in \calT$ and initial conditions $\xi \in \calX$, where $\varphi_t(\xi)$ is the map to the state at time $t$ given initial condition $\xi$. We assume that we have access to $n$ trajectories initialized from randomly sampled initial conditions. That is, we are given $\curly{\varphi_t(\xi_i)}_{i \in [n],\; t \in \calT}$, where $\xi_1,\dots,\xi_n \overset{\mathrm{i.i.d.}}{\sim} \calD$ and $\calD$ is a distribution over $\calX$. For simplicity, we assume that we can precisely differentiate $\varphi_t(\xi)$ with respect to time (in practice, we can estimate $\dot{\varphi}_t(\xi)$ numerically).

Let $\calV$ be a class of
continuously differentiable candidate Lyapunov functions $V: \R^p \to \R_{\geq 0}$ satisfying $V(0) = 0$. Fixing a constant $\eta > 0$, we define a scalar violation function $h: \calX \times \calV$ as:
\begin{align}\label{eq:h_nom}
    h(\xi, V) := \sup_{t \in \calT} \ip{\nabla V\paren{\varphi_t(\xi)} , f\paren{\varphi_t(\xi)} } + \eta V\paren{\varphi_t(\xi)}.
\end{align}
The violation function $h(\xi, V)$ scans the Lyapunov decrease condition for exponential stability with rate $\eta$ over the trajectory initialized at $\xi$, and returns the maximal value. Observe that if $h(\xi, V) \leq 0$, then $V$ certifies exponential stability along the trajectory $\varphi_t(\xi)$, $t \in \calT$. The nominal stability certification problem is therefore equivalent to the following feasibility problem:
\begin{align}\label{eq: true nominal feasibility}
    \text{Find}_{V \in \calV} \text{ s.t. } h(\xi, V) \leq 0 \quad \forall \xi \in \calX.
\end{align}
In general, various choices of $\calV$ and $h(\xi,V)$ can encode different notions of stability and accompanying certificates (see \cite{boffi2020learning} for more details). To search for a $V$ that satisfies the above optimization problem given finite data, we solve the following feasibility problem:
\begin{align}\label{eq: empirical nominal feasibility}
    \text{Find}_{V \in \calV} \text{ s.t. } h(\xi_i, V) \leq -\tau, \quad i=1,\dots,n,
\end{align}
where $\tau > 0$ is a margin that ensures generalization of the learned stability certificate $V$ on unseen trajectories. Let $\hat{V}_n$ denote a solution to \eqref{eq: empirical nominal feasibility} and define the nominal generalization error of $\hat{V}_n$ as
\begin{align}\label{eq: nominal gen error}
    \err(\hat{V}_n) := \prob_{\xi \sim \calD} \brac{ h\paren{\xi, \hat{V}_n} > 0}.
\end{align}
The nominal error \eqref{eq: nominal gen error} characterizes the probability that $\hat{V}_n$ fails to certify stability along a new trajectory with initial condition sampled from $\calD$. In \cite{boffi2020learning}, it is shown that for general classes of $\calV$, $\err(\hat{V}_n)$ decays at a rate $\tilde{O}(k/n)$, where $k$ captures the effective degrees of freedom of the stability function class $\calV$ and $\tilde{O}$ suppresses polylog dependence on $n$ and fixed problem parameters.

\subsection{Adversarially Robust Stability Certificates}\label{ssec: advrobust stability certificates}

We now consider the stability certification problem under the presence of adversarial perturbations. Consider the following two tubes of perturbed trajectories\footnote{
Existence, uniqueness, and completeness of the perturbed trajectories over the interval $[0,T]$ can be guaranteed under various
assumptions. As an example, the set \eqref{def: norm-bounded pert ball}
is well-defined if $f(x)$ is assumed to be continuous in $x$ and input-to-state stable such that $\tilde\varphi_t \in S$ for all $t\geq 0$ \citep[Prop. C.3.5]{sontag2013mathematical}.
Similarly, the set \eqref{def: lipschitz-bounded pert ball} is 
well-defined if we additionally assume that $f(x)$ is globally Lipschitz in $x$ \citep[Prop. C.3.8]{sontag2013mathematical}. We note that alternative assumptions on $f(x) + \delta(x)$ can be used to ensure completenes, e.g., that $f(x) + \delta(x)$ is stable in the sense of Lyapunov for all admissible $\delta$.
}:
\begin{align}
\Delta^u_\varepsilon(\xi) &:= \curly{\tvarphi: \dot{\tilde{\varphi}}_{t} = f(\tilde{\varphi}_t) + \delta_t,\; \tilde{\varphi}_0 = \xi, \; \norm{\delta_t}_2\leq \varepsilon,\, t \mapsto \delta_t \text{ is locally integrable}}, \label{def: norm-bounded pert ball} \\
\Delta^x_\varepsilon(\xi) &:= \curly{\tvarphi: \dot{\tvarphi}_{t} = f(\tilde{\varphi}_t) + \delta(\tilde{\varphi}_t),\; \tilde{\varphi}_0 = \xi, \; \norm{\delta(\tilde{\varphi}_t)}_2 \leq \varepsilon \norm{\tilde{\varphi}_t}_2}. \label{def: lipschitz-bounded pert ball}
\end{align}



Intuitively, $\Delta^u_\varepsilon(\xi)$ is the tube of perturbed trajectories initialized at $\xi$ for which an additive adversary has an instantaneous norm budget of $\varepsilon$ to perturb the dynamics. Analogously, $\Delta^x_\varepsilon(\xi)$ is the tube of perturbed trajectories initialized at $\xi$ for which the adversary satisfies $\varepsilon$-linear growth. We refer adversaries of the form \eqref{def: norm-bounded pert ball} as \emph{norm-bounded}, and adversaries of the form \eqref{def: lipschitz-bounded pert ball}, misnomer notwithstanding, as \emph{Lipschitz}. Indeed, given $\delta(0) = 0$, $\delta(x)$ being $\varepsilon$-Lipschitz is implied by $\varepsilon$-linear growth. The norm-bounded adversary can be used to capture small disturbances to the dynamics, such as process noise, while the Lipschitz adversary can be used to capture \textit{model} error between the training and test trajectories. 
We also define an adversary that is the linear combination of the norm-bounded and Lipschitz adversaries, which leads to the following tube of perturbed trajectories:
\begin{align}\label{def: combo of adv}
    \Delta^{x,u}_{\varepsilon_x,\varepsilon_u}(\xi) := \curly{\tvarphi: \tvarphi_{t} = f(\tvarphi_t) + \delta^x(\tvarphi_t) + \delta^u_t,\; \tvarphi_0 = \xi,\; \norm{\delta^x(\tvarphi_t)}_2 \leq \varepsilon_x \norm{\tvarphi_t}_2,\; \norm{\delta^u_t}_2 \leq \varepsilon_u }.
\end{align}
Here, the $\delta^u_t$ are additionally assumed to be locally integrable with respect to $t$.  
The tube \eqref{def: combo of adv} of perturbed trajectories defines a natural way of capturing the sim-to-real gap through the effects of both unmodeled dynamics ($\delta^x$) and process noise ($\delta^u$).

In order to accommodate additive disturbances in our stability analysis, we modify the violation function \eqref{eq:h_nom} to certify \emph{practical stability} \citep{lin1995various}, i.e., convergence to a ball about the origin.  To that end, for $\nu \geq 0$, define the adversarial violation function:
\begin{align}\label{eq: advrobust loss function}
    \tildh_{\nu}(\xi, V) := \sup_{\tvarphi \in \Delta_\varepsilon} \sup_{t \in \calT} \ip{\nabla V(\tvarphi_t(\xi)), \dot{\tvarphi}_t(\xi)} + \eta V(\tvarphi_t(\xi)) - \nu.
\end{align}
With this definition, finding an adversarially robust certificate of practical stability from data can be posed as solving the following feasibility problem analogous to \eqref{eq: empirical nominal feasibility}:
\begin{align}\label{eq: empirical advrobust feasibility}
    \text{Find}_{V \in \calV} \text{ s.t. } \tildh_{\nu}(\xi_i, V) \leq -\tau, \quad i=1,\dots,n.
\end{align}
Letting $\tilde{V}_n$ be the solution to \eqref{eq: empirical advrobust feasibility}, we consider the analogous generalization error to \eqref{eq: nominal gen error}:
\begin{align}\label{eq: advrobust gen error}
    \err(\tilde{V}_n) := \prob_{\xi \sim \calD} \brac{ \tildh_{\nu}\paren{\xi, \tilde{V}_n} > 0}.
\end{align}
Our goal is to show that the fast rates $\tilde{O}(k/n)$ enjoyed in the nominal setting are \textit{preserved} in the adversarial setting when the underlying system satisfies certain incremental stability conditions.

\section{Sample Complexity of Learning Adversarially Robust Stability Certificates} \label{sec: sample complexity}

We first introduce our main stability assumption on the system dynamics.

\begin{assumption}[Stability in the sense of Lyapunov] \label{assumption: bounded adversary traj}
Fix a perturbation set $\Delta(\cdot)$.
There exists a compact set $S \subseteq \R^p$ such that $\tvarphi_t(\xi) \in S$ for all $\xi \in \calX$, $t \in \calT$, and $\tvarphi_t(\cdot) \in \Delta(\xi)$.
\end{assumption}
For norm-bounded adversaries \eqref{def: norm-bounded pert ball}, this assumption is satisfied if the underlying nominal dynamics are input-to-state stable \citep{lin1995various}.  For Lipschitz \eqref{def: lipschitz-bounded pert ball} and combined \eqref{def: combo of adv} adversaries, additional care must be taken to ensure that $f(x) + \delta^x(x)$ remains input-to-state stable for all admissible $\delta^x(x)$.

We further make the following regularity assumptions on the certificate function class $\calV$.
\begin{assumption}[Regularity of $\calV$] \label{assumption: bounded constants}
There exists constants $L_V$, $L_{\nabla V}$ such that
for every $V \in \calV$, the maps $x \mapsto V(x)$ and $x \mapsto \ip{\nabla V(x), f(x)}$ over $x \in S$ are $L_V$ and $L_{\nabla V}$-Lipschitz, respectively.
\end{assumption}
%
Under Assumptions~\ref{assumption: bounded adversary traj} and \ref{assumption: bounded constants} and the continuity of the nominal dynamics $f(x)$, there exist constants $B_V$, $B_{\nabla V}$, and $B_{\tildh}$ such that
\begin{align*}
    \sup_{V \in \calV} \sup_{x \in S} \abs{V(x)} \leq B_V, \ \sup_{V \in \calV} \sup_{x \in S} \norm{\nabla V(x)}_2 \leq B_{\nabla V}, \ \sup_{V \in \calV} \sup_{\xi \in \calX}\abs{\tilde h(\xi, V)} \leq B_{\tildh}.
\end{align*}
Finally let $\norm{V}_\calV := \sup_{x \in S} \norm{\bmat{V(x) \\ \nabla V(x)}}_2$ denote the supremum norm on the space $\calV$.

Borrowing from the key insight in \cite{boffi2020learning}, 
we observe that any feasible solution  $\tilde{V}_n$ to \eqref{eq: empirical advrobust feasibility} achieves zero empirical risk on the loss $\tilde{\ell}_n(V) := \frac{1}{n} \sum_{i=1}^n \bone \curly{\tilde{h}(\xi_i, V) > -\tau}$.
Therefore, results from statistical learning theory regarding zero empirical risk minimizers can be applied to get fast rates for the generalization error. To do so, we define the adversarial loss class
$\tilde{\calH} := \curly{\tilde{h}(\cdot, V),\; V \in \calV}$.
Lemma 4.1 from \cite{boffi2020learning}, which is in turn adapted from Theorem 5 of \cite{srebro2010smoothness}, immediately gives the following bound on the generalization error.
    
\begin{lemma}[Generalization error bound]\label{lem: advrobust gen bound}
    Fix a $\delta \in (0,1)$. Let us assume Assumptions~\ref{assumption: bounded adversary traj} and \ref{assumption: bounded constants}. Suppose that the optimization problem \eqref{eq: empirical advrobust feasibility} is feasible and $\tilde{V}_n$ is a solution. Then the following holds with probability at least $1-\delta$ over $\xi_1,\dots,\xi_n$ drawn i.i.d.\ from $\calD$:
    \begin{align}\label{eq: adv gen bound}
        \err\paren{\tilde{V}_n} \leq K \paren{\frac{\log^3(n)}{\tau^2} \calR_n^2(\tilde{\calH}) + \frac{ \log\paren{\log\paren{B_{\tildh}/\tau}/\delta}}{n}},
    \end{align}
    where $K > 0$ is a universal constant and 
    \[\calR_n(\tilde\calH) := \sup_{\xi_1,\dots, \xi_n \in \calX} \Ex_{\sigma \sim \mathrm{Unif}\curly{\pm 1}^n}\brac{ \sup_{\tilde{h}(\cdot, V) \in \tilde\calH} \frac{1}{n} \abs{\sum_{i=1}^n \sigma_i \tildh(\xi_i, V)}}\]
    is the Rademacher complexity of the adversarial loss class $\tilde\calH$.
\end{lemma}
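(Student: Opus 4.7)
The approach is a direct application of the fast-rate generalization bound of \citet{srebro2010smoothness} (packaged as Lemma 4.1 of \citet{boffi2020learning}) to a Lipschitz ramp surrogate of the indicator loss induced by $\tilde\calH$. The key observation is that feasibility of $\tilde V_n$ for \eqref{eq: empirical advrobust feasibility} means exactly that $\tilde V_n$ achieves zero empirical risk under the thresholded loss $\xi \mapsto \bone\{\tilde h(\xi, V) > -\tau\}$, whereas $\err(\tilde V_n)$ measures the same indicator at threshold $0$. This gap is bridged by the standard margin-interpolation device.

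First, I would introduce the $1/\tau$-Lipschitz ramp $\phi_\tau(u) = \max(0,\min(1, 1 + u/\tau))$, which satisfies $\bone\{u > 0\} \le \phi_\tau(u) \le \bone\{u > -\tau\}$. Consequently,
\[\err(\tilde V_n) \le \Ex_{\xi \sim \calD}\bigl[\phi_\tau(\tilde h(\xi,\tilde V_n))\bigr], \qquad \tfrac{1}{n}\sum_{i=1}^n \phi_\tau(\tilde h(\xi_i,\tilde V_n)) = 0,\]
so the problem reduces to a one-sided uniform deviation bound over the nonnegative, bounded loss class $\phi_\tau \circ \tilde\calH$. Second, I would invoke the local Rademacher / smoothness result of \citet[Thm.\ 5]{srebro2010smoothness}: for a bounded nonnegative loss class in which some hypothesis achieves exactly zero empirical risk, the expected risk is controlled by $\widetilde O(\calR_n^2)$ rather than $\widetilde O(\calR_n)$. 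Applying Talagrand's contraction inequality to the $1/\tau$-Lipschitz ramp yields $\calR_n(\phi_\tau \circ \tilde\calH) \le \calR_n(\tilde\calH)/\tau$, which produces the $\log^3(n)\,\calR_n^2(\tilde\calH)/\tau^2$ term in \eqref{eq: adv gen bound}. The $\log\log(B_{\tildh}/\tau)$ factor in the concentration term arises from a standard union bound over a dyadic grid of margin scales between $\tau$ and $B_{\tildh}$, needed because the cited theorem's deviation term is stated for a fixed normalization of the loss while $\tilde h$ ranges in $[-B_{\tildh}, B_{\tildh}]$.

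The step requiring verification — and the only real obstacle — is checking that the cited machinery genuinely applies to the adversarial loss class $\tilde\calH$. Two conditions must be confirmed: (i) uniform boundedness of $\tilde h(\cdot, V)$ over $V \in \calV$, which is exactly the constant $B_{\tildh}$ established immediately before the lemma statement using Assumptions~\ref{assumption: bounded adversary traj} and \ref{assumption: bounded constants} together with continuity of $f$; and (ii) measurability of the map $\xi \mapsto \tilde h(\xi, V)$, i.e., the supremum over the perturbation tube $\Delta_\varepsilon$ in \eqref{eq: advrobust loss function}, which follows from local integrability of the admissible disturbances and joint continuity of $V$, $\nabla V$, and $f$ on the compact set $S$. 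Crucially, no new system-theoretic or adversary-specific Rademacher calculation enters this proof — all of that work is deferred to subsequent sections that bound $\calR_n(\tilde\calH)$ in terms of $\calR_n(\calH)$ of the nominal class. This separation is why the bound \eqref{eq: adv gen bound} drops out \emph{immediately} from the cited lemma once the boundedness constants are in hand.
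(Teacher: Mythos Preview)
Your proposal is correct and matches the paper's approach: the paper does not give a standalone proof of this lemma but simply states that it follows immediately from Lemma~4.1 of \citet{boffi2020learning}, itself adapted from Theorem~5 of \citet{srebro2010smoothness}, once one observes that any feasible $\tilde V_n$ achieves zero empirical risk on $\bone\{\tilde h(\xi_i,V) > -\tau\}$. Your sketch is a faithful unpacking of exactly that citation---the ramp surrogate, Talagrand contraction, and the fast-rate local Rademacher bound---and your verification of boundedness via $B_{\tildh}$ is precisely the role Assumptions~\ref{assumption: bounded adversary traj} and~\ref{assumption: bounded constants} play in the paper's setup.
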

Lemma~\ref{lem: advrobust gen bound} reduces bounding the generalization error of an adversarially robust stability certificate to bounding the Rademacher complexity of the adversarial loss class $\tilde{\calH}$.  We  note that the nominal results of \citet[Lemma 4.1]{boffi2020learning} are recovered by setting the perturbation budget $\varepsilon=0$.

\subsection{A Simple Adversary-Agnostic Rademacher Complexity Bound}

A standard technique for controlling the Rademacher complexity
$\calR(\tilde{\calH})$ is appealing to 
Dudley's entropy integral \citep[Ch 5.]{wainwright19book}.
Specifically, if we show that for some $L_{\tildh}$,
\[
\abs{\tildh(\xi, V_1) - \tildh(\xi, V_2)} \leq L_{\tildh} \norm{V_1 - V_2}_{\calV} \:\: \forall \xi \in \calX,  \:\: V_1,V_2 \in \calV,
\]
then Dudley's inequality implies the bound
$\calR_n(\tilde{\calH}) \leq \frac{24 L_{\tildh} }{\sqrt{n}} \int_0^\infty \sqrt{\log N(\varepsilon; \calV, \norm{\cdot}_{\calV}) } \,d\varepsilon$.
%
%
Our first result shows that our main assumptions
are sufficient to ensure that $L_{\tilde{h}}$ can be controlled
with a uniform boundedness assumption on the adversary.
\begin{lemma}[Uniformly bounded adversaries are sufficient]\label{lem: bound on L_tildh}
Suppose that (i) Assumptions~\ref{assumption: bounded adversary traj} and \ref{assumption: bounded constants} hold, (ii) $B_{\delta} := \sup_{x \in S} \sup_{t \in \calT} \norm{\delta(t,x)}_2$ is finite, and (iii) the flow $\tilde \varphi_t(\xi)$ is unique and complete over $\calT$ for all $\xi\in\calX$ and all admissible $\delta(x,t)$. Let $L_h$ denote any constant such that
    $\abs{h(\xi, V_1) - h(\xi, V_2)} \leq L_h \norm{V_1-V_2}_{\calV}$
    for all $\xi \in X$ and $V_1,V_2 \in \calV$.
    Then, $L_{\tildh} \leq L_h + B_\delta$.
\end{lemma}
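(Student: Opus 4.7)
The plan is to apply the elementary inequality $|\sup_a F_1(a) - \sup_a F_2(a)| \le \sup_a |F_1(a) - F_2(a)|$ twice in the definition of $\tilde h_\nu$ to peel off first the supremum over perturbed trajectories $\tilde\varphi \in \Delta_\varepsilon(\xi)$, and then the supremum over $t \in \calT$. The $-\nu$ term cancels. After this reduction I am left with bounding the pointwise difference of the two integrands, uniformly over all $x = \tilde\varphi_t \in S$ (which lies in $S$ by Assumption~\ref{assumption: bounded adversary traj}) and over all admissible disturbances $\delta$ satisfying $\|\delta\|_2 \le B_\delta$.

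Next I would substitute $\dot{\tilde\varphi}_t = f(\tilde\varphi_t) + \delta$ and split the resulting integrand difference into a \emph{nominal piece}
$\langle \nabla (V_1 - V_2)(x), f(x)\rangle + \eta (V_1 - V_2)(x)$
and an \emph{adversarial piece}
$\langle \nabla (V_1 - V_2)(x), \delta\rangle$,
then apply the triangle inequality. The nominal piece is exactly what appears in $h(\xi,V_1) - h(\xi,V_2)$; since the hypothesized Lipschitz bound on $h$ with constant $L_h$ is inherited from Assumption~\ref{assumption: bounded constants}, which makes $V \mapsto \langle \nabla V(x), f(x)\rangle + \eta V(x)$ uniformly Lipschitz in $V$ for every $x \in S$, the nominal piece is bounded by $L_h \|V_1 - V_2\|_\calV$ at every such $x$, not just along nominal trajectories. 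The adversarial piece is immediate from Cauchy--Schwarz:
\[
|\langle \nabla(V_1 - V_2)(x), \delta\rangle| \le \|\nabla V_1(x) - \nabla V_2(x)\|_2 \cdot \|\delta\|_2 \le B_\delta \, \|V_1 - V_2\|_\calV,
\]
where the final bound uses the definition of $\|\cdot\|_\calV$ as a supremum norm over $S$ that dominates $\sup_{x \in S}\|\nabla V(x)\|_2$. Summing the two contributions yields $L_{\tilde h} \le L_h + B_\delta$.

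The only real subtlety — and arguably the main thing to handle with care — is the observation that $L_h$ as used here must apply pointwise on the entire containment set $S$ rather than only along nominal trajectories. This is not an obstacle in practice, because the natural derivation of $L_h$ from Assumption~\ref{assumption: bounded constants} (for example, $L_h \le L_{\nabla V} + \eta L_V$ up to constants involving $\sup_{x\in S}\|f(x)\|$) produces a bound uniform over $S$; the completeness/uniqueness hypothesis (iii) simply ensures that the outer supremum over $\tilde\varphi$ is well-defined so that the interchange of suprema is legitimate. All remaining steps are routine triangle-inequality manipulations.
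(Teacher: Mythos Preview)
Your proposal is correct and follows essentially the same route as the paper: peel off the outer suprema via $\sup F_1 - \sup F_2 \le \sup(F_1 - F_2)$, split the integrand into a nominal piece bounded by $L_h\|V_1-V_2\|_\calV$ and an adversarial piece bounded via Cauchy--Schwarz by $B_\delta\|V_1-V_2\|_\calV$, then symmetrize. The subtlety you flag---that $L_h$ must hold pointwise on $S$, not merely along nominal trajectories---is exactly the implicit step the paper takes without comment, so your version is if anything slightly more careful.
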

Lemma~\ref{lem: bound on L_tildh} shows that if the nominal system is input-to-state stable and if the adversary is uniformly bounded over the set $S$ from  Assumption~\ref{assumption: bounded adversary traj}, then by Dudley's inequality, the Rademacher complexity $\calR_n(\tilde{\calH})$ is on the same order as the nominal complexity $\calR_n(\calH)$. Consequently by Lemma~\ref{lem: advrobust gen bound}, the adversarial generalization bound $\err(\tilde{V}_n)$ is on the same order as the nominal bound $\err(\hat{V}_n)$. 
We show next that with stronger assumptions on the stability of the dynamics, we can obtain bounds on $\calR_n(\tilde{\calH})$ that are additive, rather than muliplicative, with respect to the nominal complexity $\calR_n(\calH)$. Furthermore, these bounds are also robust to Lipschitz adversarial perturbations.

\subsection{Improving the Adversarial Rademacher Complexity via Stability}\label{ssec: bounding rad complexity}

To improve the bound from Lemma~\ref{lem: bound on L_tildh}, we first adapt a fundamental fact from the calculus of Rademacher complexities \citep[Thm.~12, Property~5]{bartlett2002rademacher}, along with the trivial identity $\tildh(\cdot, V) = h(\cdot, V) + \paren{\tildh(\cdot, V) - h(\cdot, V)}$ to conclude that:
\begin{align}\label{eq: rad complex tildh - h}
    \calR_n(\tilde{\calH}) \leq \calR_n(\calH) + \sup_{\xi \in X} \sup_{V \in \calV} \frac{1}{\sqrt{n}} \abs{\tildh(\xi, V) - h(\xi, V)}.
\end{align}
Therefore, in order to bound $\calR_n(\tilde{\calH})$, it suffices to uniformly bound $\tildh(\xi, V) - h(\xi, V)$ over $\xi \in X$ and $V \in \calV$. To do so, we introduce the notion of $(\beta, \rho, \gamma)$-exponential-incrementally-input-to-state stability \citep{angeli2002lyapunov, boffi2020regret}.
\begin{definition}[$(\beta, \rho, \gamma)$-E-$\delta$ISS]\label{def: cont time EISS}
Let $\beta, \rho, \gamma> 0$ be positive constants. A continuous-time dynamical system $\dot{x}=f(x,t)$ is $(\beta, \rho, \gamma)$-exponential-incrementally-input-to-state stable ($(\beta, \rho, \gamma)$-E-$\delta$ISS) if, for any pair of initial conditions $(x_0,y_0)$ and signal $u(t)$ -- which can depend causally on $x,y$ -- the trajectories $\dot{x}(t) = f(x(t))$ and $\dot{y}(t) = f(y(t)) + u(t)$ satisfy for all $t \geq 0$:
\begin{align*}
    \norm{x_t - y_t}_2 &\leq \beta \norm{x_0 - y_0}_2 e^{-\rho t} + \gamma \int_0^t e^{-\rho(t-s)} \norm{u_s}_2 \,ds.
\end{align*}
\end{definition}
In short, the dependence of the distance between two trajectories on the initial conditions shrinks exponentially with time (incremental stability), and is input-to-state stable with respect to the inputs entering $y_t$. This notion of stability is strongly related to notion of contraction \citep{lohmiller1998contraction},
as illustrated by the following lemma.
\begin{lemma}[Contraction implies E-$\delta$ISS]
\label{lemma:contraction_implies_ISS}
Let $M(x, t)$ denote a positive definite Riemannian metric
and $f(x, t)$ denote a continuous-time dynamical system.
Suppose both $M$ and $f$ are continuously differentable,
and that there are constants $0 < \mu \leq L < \infty$ and
$\lambda > 0$ such that for all $x \in \R^n$ and $t \in \R_{\geq 0}$,
the metric $M(x, t)$ satisfies
$\mu I \preccurlyeq M(x, t) \preccurlyeq L I$,
and the function $f(x, t)$ satisfies:
\begin{align*}
    \frac{\partial f}{\partial x}(x, t)\T M(x, t) + M(x, t) \frac{\partial f}{\partial x}(x, t) + \dot{M}(x, t) \preccurlyeq -2\lambda M(x, t).
\end{align*}
Then, the dynamical system $\dot{x} = f(x, t)$ is $(\sqrt{L/\mu}, \lambda, \sqrt{L/\mu}$)-E-$\delta$ISS.
\end{lemma}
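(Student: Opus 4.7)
The plan is to run the classical path-based contraction argument, lifting a pointwise differential decay estimate to a bound on the Riemannian distance between $x(t)$ and $y(t)$, where $\dot x = f(x,t)$ and $\dot y = f(y,t) + u(t)$. First, I introduce an auxiliary family of trajectories $z(s,t)$ for $s \in [0,1]$ satisfying
\begin{equation*}
    \dot z(s,t) = f(z(s,t), t) + s\, u(t), \qquad z(s,0) = c(s),
\end{equation*}
where $c:[0,1]\to\R^p$ is a smooth curve from $x(0)$ to $y(0)$ chosen (or taken in the limit) to realize the Riemannian distance $d_M(x(0), y(0))$. By uniqueness of solutions, $z(0,t)=x(t)$ and $z(1,t)=y(t)$, so $s\mapsto z(s,t)$ is a curve joining the two trajectories at every time $t$.

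The next step is to analyze the tangent field $v(s,t) := \partial_s z(s,t)$, which obeys the variational equation $\partial_t v = \tfrac{\partial f}{\partial x}(z,t)\,v + u(t)$. Differentiating $v\T M(z,t)\,v$ in time and invoking the contraction hypothesis gives
\begin{equation*}
    \tfrac{d}{dt}\paren{v\T M v} \leq -2\lambda\, v\T M v + 2\, v\T M u.
\end{equation*}
Applying Cauchy--Schwarz, the bound $M \preccurlyeq L I$, and taking square roots shows $W(s,t) := \sqrt{v\T M v}$ satisfies the scalar inequality $\partial_t W \leq -\lambda W + \sqrt{L}\,\norm{u(t)}_2$. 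A standard Gronwall comparison then yields
\begin{equation*}
    W(s,t) \leq e^{-\lambda t} W(s,0) + \sqrt{L}\int_0^t e^{-\lambda(t-\tau)} \norm{u(\tau)}_2\, d\tau.
\end{equation*}

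Integrating over $s \in [0,1]$ bounds the Riemannian length of $s \mapsto z(s,t)$; since the Riemannian distance is the infimum of such lengths and $c$ realizes $d_M(x(0),y(0))$, we conclude
\begin{equation*}
    d_M(x(t), y(t)) \leq e^{-\lambda t}\, d_M(x(0), y(0)) + \sqrt{L}\int_0^t e^{-\lambda(t-\tau)} \norm{u(\tau)}_2\, d\tau.
\end{equation*}
Finally, the sandwich $\mu I \preccurlyeq M \preccurlyeq L I$ gives $\sqrt{\mu}\,\norm{x-y}_2 \leq d_M(x,y) \leq \sqrt{L}\,\norm{x-y}_2$, so dividing the display above by $\sqrt{\mu}$ and using this on both sides delivers the claimed estimate with $\beta = \gamma = \sqrt{L/\mu}$ and $\rho = \lambda$.

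The main technical point to handle carefully is the chain-rule expansion of $\tfrac{d}{dt} M(z(s,t), t)$ along the perturbed flow: beyond the material derivative $\dot M$ that appears in the contraction inequality, one formally gets an additional term $\partial_x M \cdot s\, u$. This term is absent when $M$ depends only on $t$, and in the general state-dependent case it is handled either by the differential/geodesic formulation of contraction (in which the contraction hypothesis is applied pointwise to the variational dynamics) or by invoking smoothness of $M$; either way the argument funnels into the same scalar Gronwall step above. The rest of the proof consists of routine manipulations.
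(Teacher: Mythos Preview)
Your path-based (virtual-displacement) argument differs from the paper's proof, which works directly with the Riemannian energy of the \emph{minimizing geodesic} $\gamma_t$ between $x_p(t)$ and $x(t)$ at each fixed time. In the paper's formulation the disturbance enters only through the endpoint velocity $\dot\gamma_t(0)=f(x_p)+d(x_p,t)$, so the first-variation formula isolates it as a single boundary term $-2\langle\gamma_t'(0),d(x_p,t)\rangle$; the contraction hypothesis dispatches the nominal part and one obtains $\tfrac{d}{dt}d_M\le-\lambda\,d_M+\norm{\Theta d}_2$ without ever differentiating $M$ along a perturbed flow.

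This is precisely where your argument has a gap that you flag but do not close. Along your interpolated trajectories the total time derivative of $M(z(s,t),t)$ is $\partial_t M+\partial_x M\cdot f+\partial_x M\cdot(s\,u)$, so
\[
\tfrac{d}{dt}\bigl(v\T M v\bigr)\;\le\; -2\lambda\,v\T M v+2\,v\T M u+v\T\bigl(\partial_x M\cdot(s\,u)\bigr)v.
\]
The contraction inequality absorbs only $\partial_t M+\partial_x M\cdot f$ (this is what $\dot M$ means in the hypothesis), and the residual $v\T(\partial_x M\cdot s\,u)v$ is not controlled by any assumption of the lemma: smoothness of $M$ yields only local bounds, not the uniform-in-$(x,t)$ estimate Gronwall requires, and appealing to ``the geodesic formulation'' is precisely the paper's route rather than a repair of yours. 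Your proof is therefore complete only when $M$ is state-independent. For general $M(x,t)$ you should either pass to the geodesic-energy argument (so the disturbance stays a boundary term) or impose and use an explicit uniform bound on $\partial_x M$, accepting correspondingly worse constants than the lemma claims.
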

Lemma~\ref{lemma:contraction_implies_ISS}
is the analogous result of
Proposition~5.3 of \cite{boffi2020regret} for continuous-time systems.
We note that this result originally appeared in \cite[Section 3.7, Remark (vii)]{lohmiller1998contraction} without explicit proof.
Leveraging $(\beta,\rho,\gamma)$-E-$\delta$ISS, we can derive a uniform bound on $|\tildh(\xi,V)-h(\xi,V)|$ that scales with the stability parameters of the underlying system, which combined with inequality \eqref{eq: rad complex tildh - h} yields the following bounds on $\calR_n(\tildcalH)$ for the tubes~\eqref{def: norm-bounded pert ball}-\eqref{def: combo of adv}.

\begin{theorem}[E-$\delta$ISS yields additive bounds]\label{thm: CT EdISS rad complexity bound}
Put $B_X := \sup_{\xi \in \calX} \norm{\xi}_2$, let Assumption \ref{assumption: bounded constants} hold, and assume that the nominal system $f(x)$ is $(\beta, \rho, \gamma)$-E-$\delta$ISS. Then for

\begin{itemize}[noitemsep,topsep=1pt,parsep=0pt,partopsep=0pt]
\item adversarial trajectories drawn from the norm-bounded tube $\Delta^u_\varepsilon(\xi)$ defined in~\eqref{def: norm-bounded pert ball}, Assumption~\ref{assumption: bounded adversary traj} holds and
\begin{align}
    \calR_n(\tilde{\calH}) &\leq \calR_n(\calH) +  \brac{\paren{L_{\nabla V} + \eta L_V}\gamma\varepsilon\rho^{-1} + B_{\nabla V}\varepsilon + \nu}  \frac{1}{\sqrt{n}},
\end{align}
\item adversarial trajectories drawn from the Lipschitz tube $\Delta^x_\varepsilon(\xi)$ defined in~\eqref{def: lipschitz-bounded pert ball}, if $\varepsilon > 0$ is small enough such that $\gamma \varepsilon < \rho$, then Assumption~\ref{assumption: bounded adversary traj} holds and
\begin{align}
    \calR_n(\tilde{\calH}) &\leq \calR_n(\calH) +  \bigg[\paren{L_{\nabla V} + \eta L_V + B_{\nabla V}\varepsilon} \frac{\gamma\varepsilon\rho^{-1}}{1 - \gamma\varepsilon\rho^{-1}}e^{-1}B_X \beta\varepsilon \nonumber \\
    &\qquad\qquad\qquad + B_{\nabla V}B_X\beta\varepsilon + \nu \bigg] \frac{1}{\sqrt{n}},
\end{align}
\item adversarial trajectories drawn from the combined tube $\Delta^{x,u}_{\varepsilon_x, \varepsilon_u}$ defined in~\eqref{def: combo of adv}, if $\varepsilon_x > 0$ is small enough such that $\gamma \varepsilon_x < \rho$, then Assumption~\ref{assumption: bounded adversary traj} holds and
\begin{align}
    \calR_n(\tilde{\calH}) &\leq \calR_n(\calH) +  \bigg[\paren{L_{\nabla V} + \eta L_V + B_{\nabla V}\varepsilon_{x}} \frac{\gamma\varepsilon_u \rho^{-1} +  \gamma \varepsilon_x\rho^{-1}e^{-1}B_X \beta\varepsilon_x  }{1 - \gamma\varepsilon_x\rho^{-1}} \nonumber \\
    &\qquad\qquad\qquad + B_{\nabla V} \beta\varepsilon_x B_X  + B_{\nabla V}\varepsilon_u + \nu\bigg] \frac{1}{\sqrt{n}}.
\end{align}
\end{itemize}
\end{theorem}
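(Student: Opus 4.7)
The plan is to apply the pointwise inequality~\eqref{eq: rad complex tildh - h} and then separately control $|\tildh(\xi,V) - h(\xi,V)|$ uniformly over $\xi \in \calX$ and $V \in \calV$ for each of the three tubes. The scaffolding is common: fixing an admissible perturbation and writing $\tvarphi_t, \varphi_t$ for the perturbed and nominal trajectories starting from the same $\xi$, the integrand difference between~\eqref{eq: advrobust loss function} and~\eqref{eq:h_nom} at time $t$ decomposes as
\[
\bigl[\langle \nabla V(\tvarphi_t), f(\tvarphi_t)\rangle - \langle \nabla V(\varphi_t), f(\varphi_t)\rangle\bigr] + \eta\bigl[V(\tvarphi_t) - V(\varphi_t)\bigr] + \langle \nabla V(\tvarphi_t), \delta_t\rangle,
\]
which, by Assumption~\ref{assumption: bounded constants} and the uniform bound on $\|\nabla V\|$, is controlled by $(L_{\nabla V} + \eta L_V)\|\tvarphi_t - \varphi_t\|_2 + B_{\nabla V}\|\delta_t\|_2$. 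Taking the supremum over admissible $\delta$ and over $t \in \calT$, and paying the $\nu$-shift built into~\eqref{eq: advrobust loss function}, reduces the theorem to a uniform bound on $\sup_t \|\tvarphi_t - \varphi_t\|_2$ and on the relevant norm of the adversary.

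These bounds follow from Definition~\ref{def: cont time EISS} applied with matched initial conditions $x_0 = y_0 = \xi$ and input $u_s = \delta_s$ (respectively $\delta(\tvarphi_s)$), so the initial-condition term drops and $\|\tvarphi_t - \varphi_t\|_2 \leq \gamma \int_0^t e^{-\rho(t-s)}\|u_s\|_2\,ds$. For~\eqref{def: norm-bounded pert ball}, $\|u_s\|_2 \leq \varepsilon$ integrates trivially to $\gamma\varepsilon/\rho$ and $\|\delta_t\|_2 \leq \varepsilon$, producing the first claim. For~\eqref{def: lipschitz-bounded pert ball} I would bound $\|u_s\|_2 \leq \varepsilon\|\tvarphi_s\|_2 \leq \varepsilon\bigl(\|\tvarphi_s - \varphi_s\|_2 + \|\varphi_s\|_2\bigr)$ and invoke E-$\delta$ISS a second time, comparing $\varphi$ against the zero trajectory of $f$ (using $f(0)=0$), to get $\|\varphi_s\|_2 \leq \beta B_X e^{-\rho s}$. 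This gives the self-referential Gronwall inequality
\[
u(t) \leq \gamma\varepsilon \int_0^t e^{-\rho(t-s)} u(s)\,ds + \gamma\varepsilon \beta B_X\, t\, e^{-\rho t}, \qquad u(t) := \|\tvarphi_t - \varphi_t\|_2.
\]
A Gronwall-then-sup argument, using $\sup_{t \geq 0} t e^{-\rho t} = (\rho e)^{-1}$ and the standing hypothesis $\gamma\varepsilon < \rho$ to invert $1 - \gamma\varepsilon/\rho$, yields a closed-form bound on $\sup_t u(t)$; the remaining term $B_{\nabla V}\|\delta(\tvarphi_t)\|_2 \leq B_{\nabla V}\varepsilon(\sup_t u(t) + \beta B_X)$ contributes both the $B_{\nabla V}\varepsilon$ inside the first parenthesis and the free $B_{\nabla V}\beta B_X\varepsilon$ summand. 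The combined tube~\eqref{def: combo of adv} follows the same recursion with $\|u_s\|_2 \leq \varepsilon_u + \varepsilon_x\|\tvarphi_s\|_2$, which injects an extra $\gamma\varepsilon_u/\rho$ additive forcing and produces the stated sum inside the numerator. Along the way, Assumption~\ref{assumption: bounded adversary traj} is verified via the triangle inequality $\|\tvarphi_t\|_2 \leq \sup_t u(t) + \beta B_X$, which confines $\tvarphi_t$ to a bounded ball to be identified with $S$.

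The main obstacle is the Gronwall step for the Lipschitz and combined cases: one must correctly fold the self-referential $u(s)$ integrand and the decaying forcing $t e^{-\rho t}$ into a single closed-form supremum bound, and the hypothesis $\gamma\varepsilon < \rho$ is indispensable both for the contraction factor $1 - \gamma\varepsilon/\rho$ to be positive and for the perturbed tube to remain in a bounded $S$. Matching the resulting arithmetic against the constants displayed in the theorem, and ensuring the two-sided estimate $|\tildh - h|\leq (\cdot) + \nu$ (which uses that $\delta \equiv 0$ is admissible in each tube so that $\tildh_0 \geq h$ and the $-\nu$ shift contributes only an additive $\nu$), is then routine.
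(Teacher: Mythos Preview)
Your proposal is correct and matches the paper's proof essentially line for line: the same decomposition of $\tildh - h$ via the Lipschitz constants of Assumption~\ref{assumption: bounded constants}, the same application of E-$\delta$ISS with matched initial conditions, and for the Lipschitz and combined tubes the same self-referential integral inequality resolved by taking the supremum in $t$ on both sides, using $\sup_{t\geq 0} t e^{-\rho t} = (\rho e)^{-1}$, and inverting $1 - \gamma\varepsilon\rho^{-1}$ (what you call the Gronwall step is exactly the sup-then-solve argument the paper uses). Your justification of the lower bound $h - \tildh_\nu \leq \nu$ via admissibility of $\delta \equiv 0$ is in fact more explicit than the paper's own write-up.
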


In particular, Theorem \ref{thm: CT EdISS rad complexity bound} shows that $\calR_n(\tildcalH) \leq \calR_n(\calH) + O(1)\frac{1}{\sqrt{n}}$
for all the aforementioned adversary classes.  Here, $O(1)$ suppresses all problem specific constants.  This demonstrates that under the assumptions of Theorem~\ref{thm: CT EdISS rad complexity bound}, the Rademacher complexity of the resulting adversarial loss class is no more than an additive factor of order $O(1/\sqrt{n})$ greater than the Rademacher complexity class of the nominal loss class. 
Because a typical scaling of $\calR_n(\calH) \asymp \sqrt{k/n}$ where $k$ is the effective degrees of freedom of $\calV$, the
$O(1/\sqrt{n})$ term is often negligible compared to $\calR_n(\calH)$. 



The bounds in Theorem~\ref{thm: CT EdISS rad complexity bound} involving the Lipschitz adversary are only valid when the denominator $1 - \gamma \varepsilon \rho^{-1}$ is positive, hence the necessary assumption that $\gamma \varepsilon < \rho$. This is a necessary assumption; when the budget for the Lipschitz adversary is too large, then an adversary can cause the system to diverge exponentially. To illustrate this, consider the scalar system $\dotx = -\rho x$, which we can verify is $(1,\rho,1)$-E-$\delta$ISS, perturbed by a $\varepsilon$-Lipschitz adversary that adds $\varepsilon x$ to the dynamics such that $\doty = -(\rho - \varepsilon) y$. If $\varepsilon > \rho$, then the perturbed trajectory will diverge away from $0$ exponentially and we cannot hope to find a uniform bound on $\tildh(\xi, V) - h(\xi, V)$ for all $t$.

We conclude this section with an important example of a certificate function class and its associated Rademacher complexities.  This example further highlights that the additive $O(1/\sqrt{n})$ factor is comparatively negligible for many certificate function classes of interest.

\begin{example}[Lipschitz Parametric Function Classes] \label{ex: lip param fxn class}
Consider the parametric function class
\begin{equation}\label{def: lip param fxn class}
    \calV = \curly{V_{\theta}(\cdot) = g(\cdot, \theta): \theta \in \R^k,\;\norm{\theta} \leq B_\theta},
\end{equation}
where we assume $g: \R^p \times \R^k \to \R$ is twice-continuously differentiable. 
The description~\eqref{def: lip param fxn class} is very general; for example, feed-forward neural networks with differentiable activation functions and sum-of-squares polynomials lie in this function class. It is shown in \citet{boffi2020learning} 
that $\calR_n(\calH) = O(\sqrt{k/n})$. Combining this with Theorem~\ref{thm: CT EdISS rad complexity bound}, we conclude that
\[
\calR_n(\tildcalH) \leq \calR_n(\calH) + O(1/\sqrt{n}) = O(\sqrt{k/n}).
\]
\end{example}




\section{Learning Adversarially Robust Certificates in Practice}\label{sec: certs in practice}

In this section, we illustrate the practicality and effectiveness
of learning adversarial certificates. We consider the damped pendulum
with dynamics $m\ell^2 \ddot{\theta} + b\dot{\theta} + mg\ell\sin(\theta) = 0$, where we set $m=1$, $\ell = 1$, $b = 2$, and $g = 9.81$. The state space is given by $x = (\theta,  \dot{\theta}) \in \R^2$ with stable equilibrium is at the origin and we wrap $\theta$ to the interval $(-\pi, \pi]$. Consider the following certificate function class
\begin{align}\label{def: experiments NN certificate}
    \calV &= \curly{V_\theta(x) = x^\top \paren{L_\theta(x)^\top L_\theta(x) + I} x,\;\theta\in \R^{p \times h \times h \times p\cdot (2p)} }, 
\end{align}
where $L_\theta(x) \in \R^{2p \times p}$ is the re-shaped output of a fully-connected neural network with 2 hidden layers of width $h = 20$ and $\tanh$ activations. 

We first demonstrate the robustness properties of an adversarially trained Lyapunov function versus a nominal one. We collect $n = 1000$ trajectories with randomly sampled initial conditions $\xi \sim \mathrm{Unif}\paren{[-2, 2]^2}$. Each trajectory is rolled out using  \texttt{scipy.integrate.solve\_ivp} with horizon $T = 8$ and $dt = 0.05$, such the size of the total dataset is $1000 \times 160 \times 2$. Following \citet{boffi2020learning}, the nominal Lyapunov function $V_{\mathrm{nom}}$ is learned by minimizing the surrogate loss
\begin{equation}
    L(\theta; \eta, \lambda) = \sum_{i=1}^{1000} \sum_{k=1}^{160} \mathrm{ReLU}\brac{\ip{\nabla V_\theta(x_i(k)), \dot{x}_i(k))} + \eta V_{\theta}(x_i(k))  } + \lambda \norm{\theta}_2^2,
\end{equation}
where we set the exponential rate $\eta = 0.4$ and regularization parameter $\lambda = 0.1$. The loss is minimized for $500$ epochs with Adam \citep{kingma2014adam} with cosine decay, initialized at step size $0.005$, and batch size $1000$. 

Solving for the adversarially robust Lyapunov function is challenging due to the inner maximization problem over perturbations entering through the dynamics. 
%
As is standard in the adversarial learning literature, we instead approximate the true adversarially robust loss function via an alternating scheme, summarized in Algorithm~\ref{alg: training Vadv}. We set $m=5$, and each inner minimization of $L(\theta; \eta, \lambda)$ runs for $100$ epochs. 
%
The approximate adversarial computation uses a simple greedy heuristic: at any $x$, the maximal direction to increase the Lyapunov decrease condition $\ip{\nabla V(x), f(x) + \delta} + \eta V(x)$ is $\delta = c \nabla V(x)$, where $c>0$ is a normalizing factor to adjust $\delta$ for the adversarial budget $\varepsilon$. In this experiment, we use the Lipschitz adversary, and thus $c = \varepsilon \frac{\norm{x}_2}{\norm{\nabla V(x)}_2}$. Through Algorithm~\ref{alg: training Vadv}, we get an adversarially trained Lyapunov function $V_{\mathrm{adv}}$, which can only be less robust than the true adversarially robust function $\tilde{V}_n$ due to the suboptimal adversary computation. Nevertheless, our approximate robust Lyapunov function $V_{\mathrm{adv}}$ is seen to perform well in the face of practically relevant perturbations to the system.

\begin{algorithm}
\caption{Training adversarially robust Lyapunov function $\Vadv$ (Lipschitz adversary)}
\label{alg: training Vadv}

\begin{algorithmic}[1]
\State \textbf{Input:} Initial conditions $\curly{\xi_i}_{i=1}^{n}$, rate $\eta > 0$, adversarial budget $\varepsilon > 0$, alternations $m$. 
\State Compute nominal trajectories $\bfT = \curly{x(\xi_i)}_{i =1}^{n}$.\;
\For{$i=1, ..., m - 1$}
    \State Minimize $L(\theta; \eta, \lambda)$ with respect to $\bfT$ to get $V$.\;
    \State Re-compute $\bfT$ using dynamics $\dotx_i(t) = f(x_i(t)) + \varepsilon \frac{\norm{x_i(t)}_2}{\norm{\nabla V(x_i(t))}_2} \nabla V(x_i(t))$, $x_i(0) = \xi_i$.\;
\EndFor
\State Minimize $L(\theta; \eta, \lambda)$ with respect to $\bfT$ to get $V$.\;
\State \textbf{Output:} Adversarially trained Lyapunov function $\Vadv = V$.
\end{algorithmic}

\end{algorithm}

We assess the robustness of the nominal and robust certificates $\Vnom$ and $\Vadv$ by measuring how well they certify stability on various classes of perturbed trajectories. We first draw an additional test set of $n = 1000$ initial conditions from $\mathrm{Unif}\paren{[-2, 2]^2}$. For each class of perturbation, we vary the decrease rate parameter $\eta \in [0,1]$ (recall that the certificates $\Vnom$ and $\Vadv$ were trained with decrease rate $\eta = 0.4$) and measure both the proportion of whole trajectories as well as the total proportion of the $1000 \times 160$ states that satisfy the Lyapunov decrease condition with rate $\eta$. 

\setlength\intextsep{0pt}
\begin{wrapfigure}[12]{r}{0.5\textwidth}
    \centering
    \includegraphics[width=0.5\textwidth]{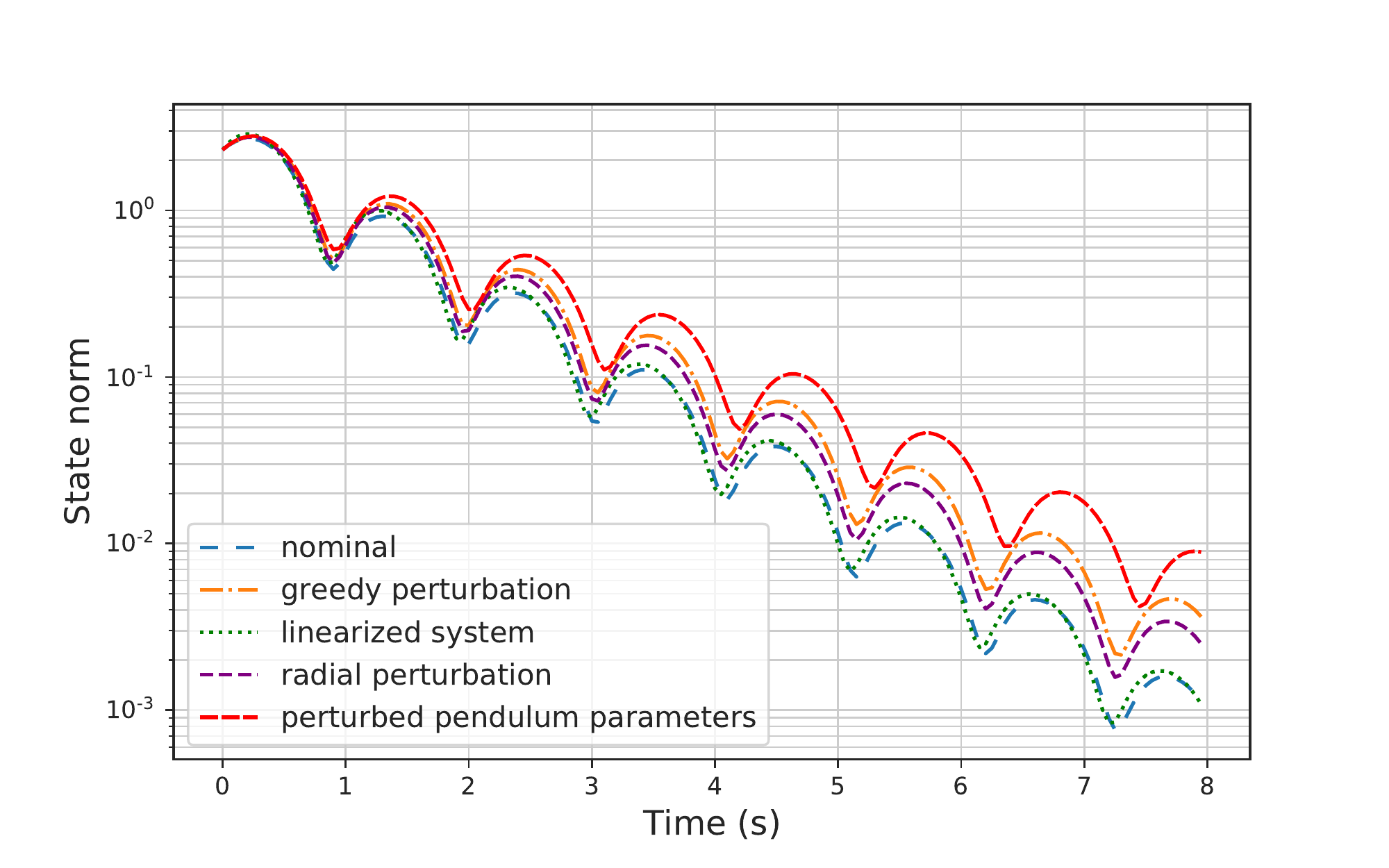}
    \caption{Norm of pendulum state over time starting at a fixed initial condition, for nominal and perturbed trajectories perturbed as described in Section~\ref{sec: certs in practice}.}
    \label{fig: perturbed trajectories}
\end{wrapfigure}

We consider the following four classes of perturbed trajectories:
\begin{enumerate}[itemsep=0pt]
    \item $\dotx = f(x) + \varepsilon \frac{\norm{x}_2}{\norm{\nabla \Vadv(x)}_2} \nabla \Vadv(x)$, analogous to the adversarial training process,
    \item  $\dotx = f(x) + \varepsilon x$, which is a Lipschitz adversary that aims to greedily maximize $\norm{x(t)}_2^2$ at any given time $t$,
    \item the dynamics resulting from using the linearization of the damped pendulum at the origin to generate the trajectories, and
    \item the dynamics resulting from setting $\tilde{m} = \tilde{\ell} = 1.1$ instead of $m = \ell = 1$. 
\end{enumerate}

The perturbation class 1 acts in the direction $\nabla \Vadv$, and thus the perturbed trajectories are tuned to degrade the performance of $\Vadv$. 
Additionally, the perturbation classes 3 and 4 can be viewed as instances of the sim-to-real gap, where there are model discrepancies between training and test.


Figure~\ref{fig: satisfaction rates diff perturbs}
plots the resulting Lyapunov decrease satisfaction rates for each type of perturbation. 
We observe that for each type of perturbation, the nominal certificate $\Vnom$ fails to certify any trajectories when $\eta = 0.4$. In contrast, the robust certificate $\Vadv$ certifies all trajectories for decrease rates $\eta = 0.4$.  We further observe that the robust certificate is also able to certify \emph{faster} decrease rates as well. 
Finally, we note that the trajectories resulting from perturbed pendulum parameters (perturbation class 4) actually cause the system to be more unstable than the greedy perturbations (perturbation class 1) used during training (see Figure~\ref{fig: perturbed trajectories}). Nevertheless, the robust certificate $\Vadv$ is able to certify stability for a large range of $\eta$.



\section{Conclusion}\label{sec: conclusion}
Motivated by bridging the sim-to-real gap, we proposed and analyzed an approach to learning adversarially robust Lyapunov certificates.  We showed that for systems that enjoy exponential incremental input-to-state stability, stability certificate functions that are robust to norm-bounded and Lipschitz adversarial perturbations to the system dynamics can be learned with negligible statistical overhead as compared to the nominal case. Future research directions include exploring the statistical tradeoffs occurring from progressively weaker notions of stability (e.g., incremental gain stability as defined in \citet{tu2021sample}), providing approximation guarantees for the adversarial training algorithm proposed in Section~\ref{sec: certs in practice}, and extending our results to provide statistical guarantees for policies synthesized from robust certificate functions~\citep{lindemann21rocbf,taylor21robustsynthesis}.

\begin{figure}
    \centering
    \stackunder[0pt]{\includegraphics[width=0.48\textwidth]{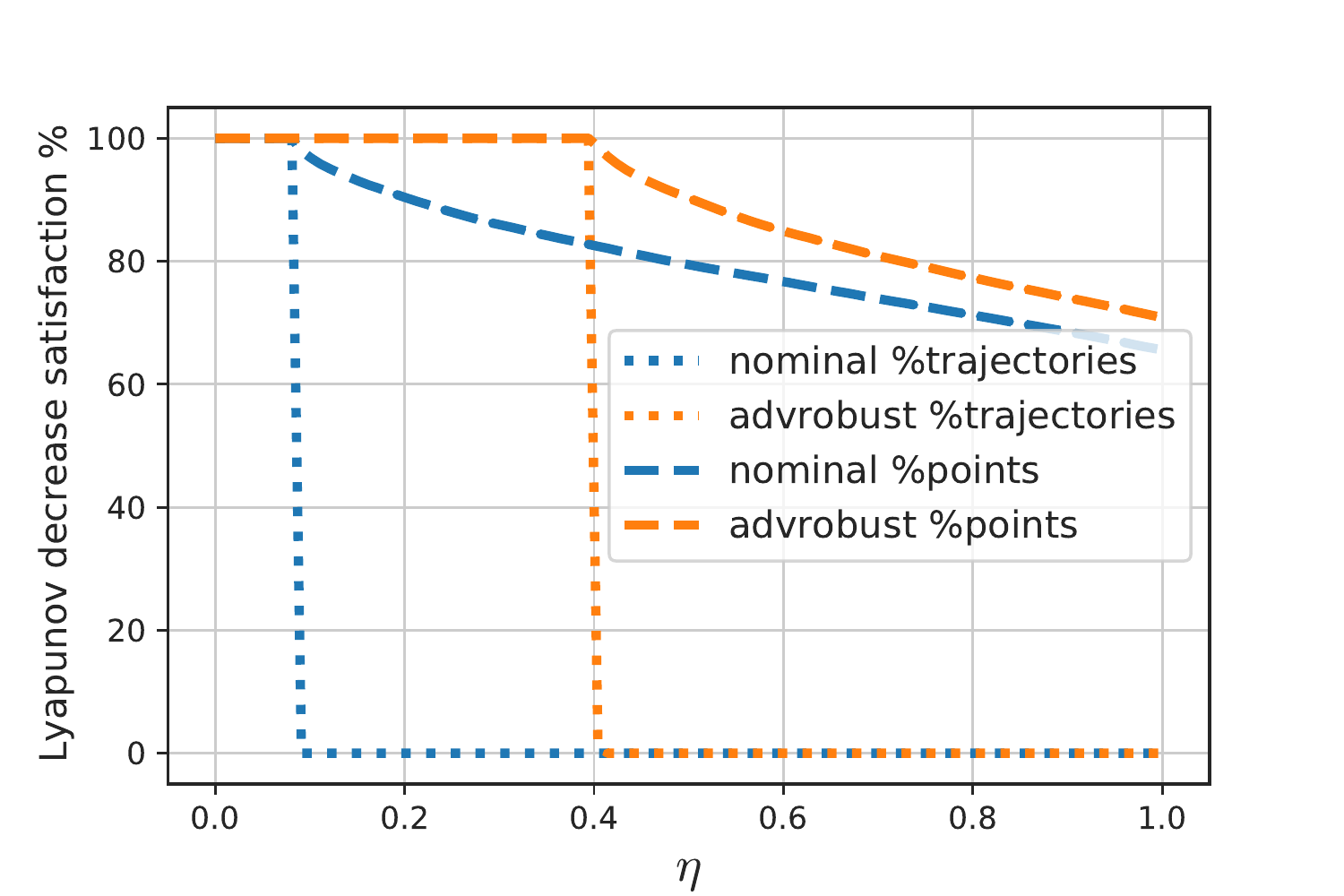}}{}
    \stackunder[0pt]{\includegraphics[width=0.48\textwidth]{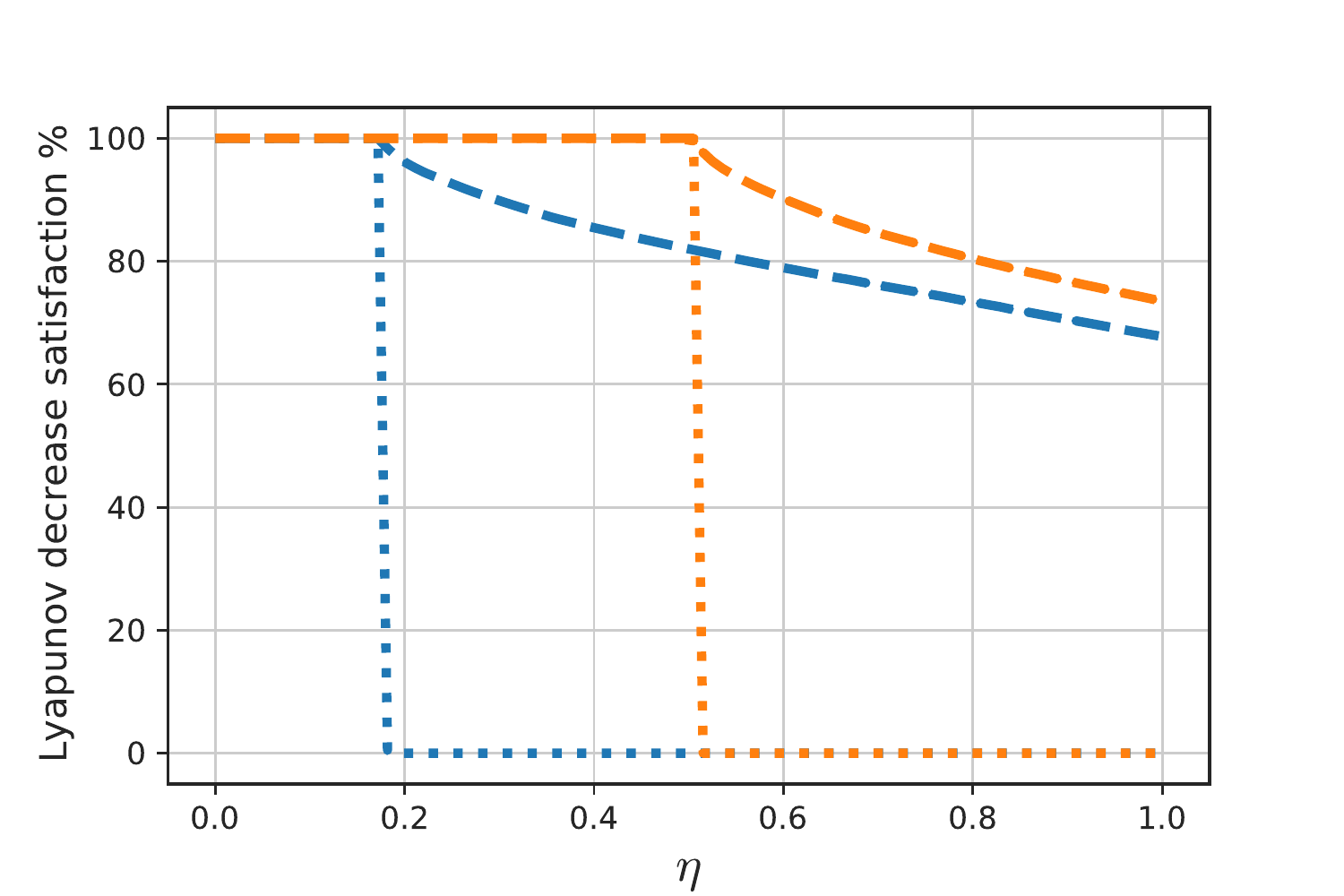}}{}
    \stackunder[0pt]{\includegraphics[width=0.48\textwidth]{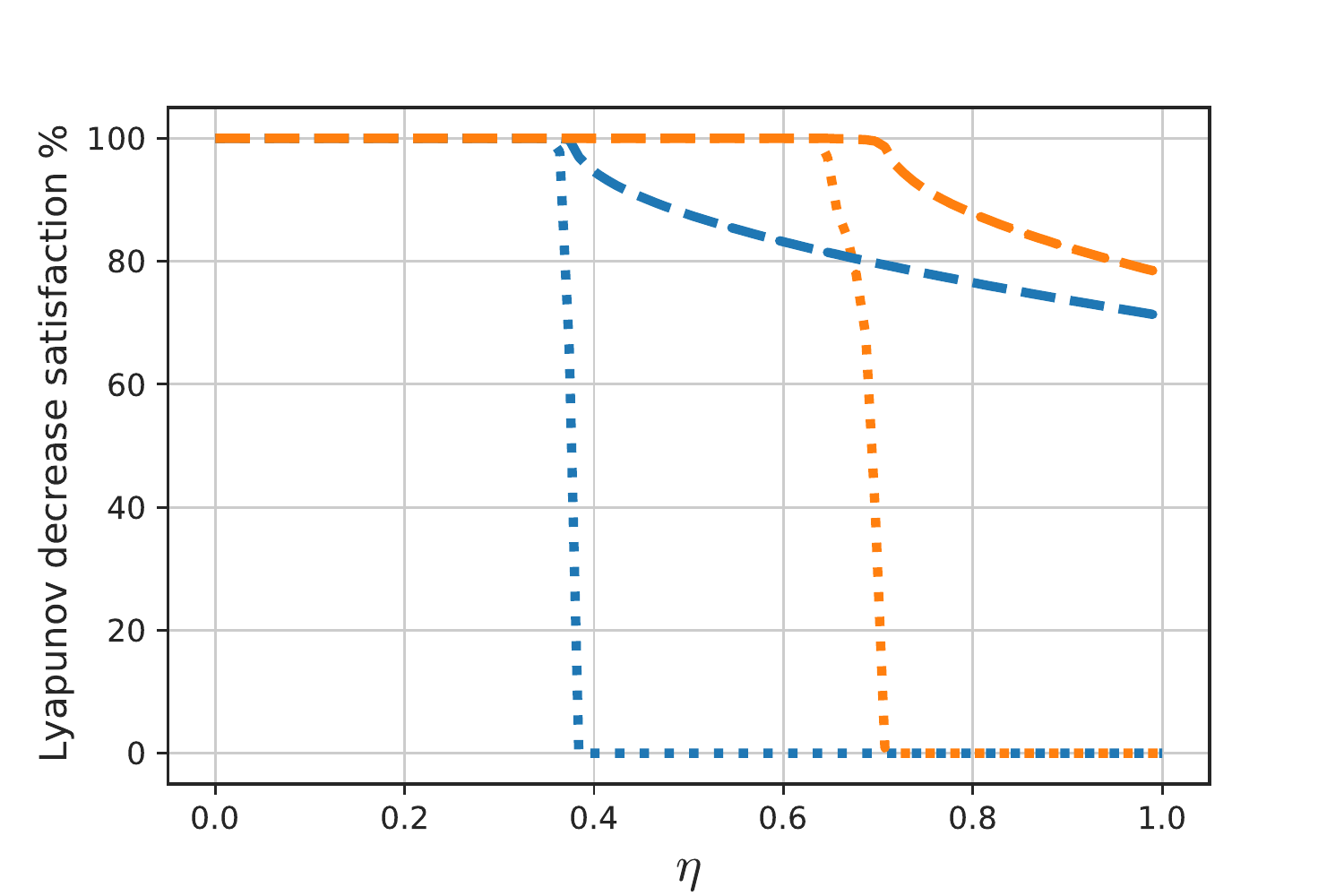}}{}
    \stackunder[0pt]{\includegraphics[width=0.48\textwidth]{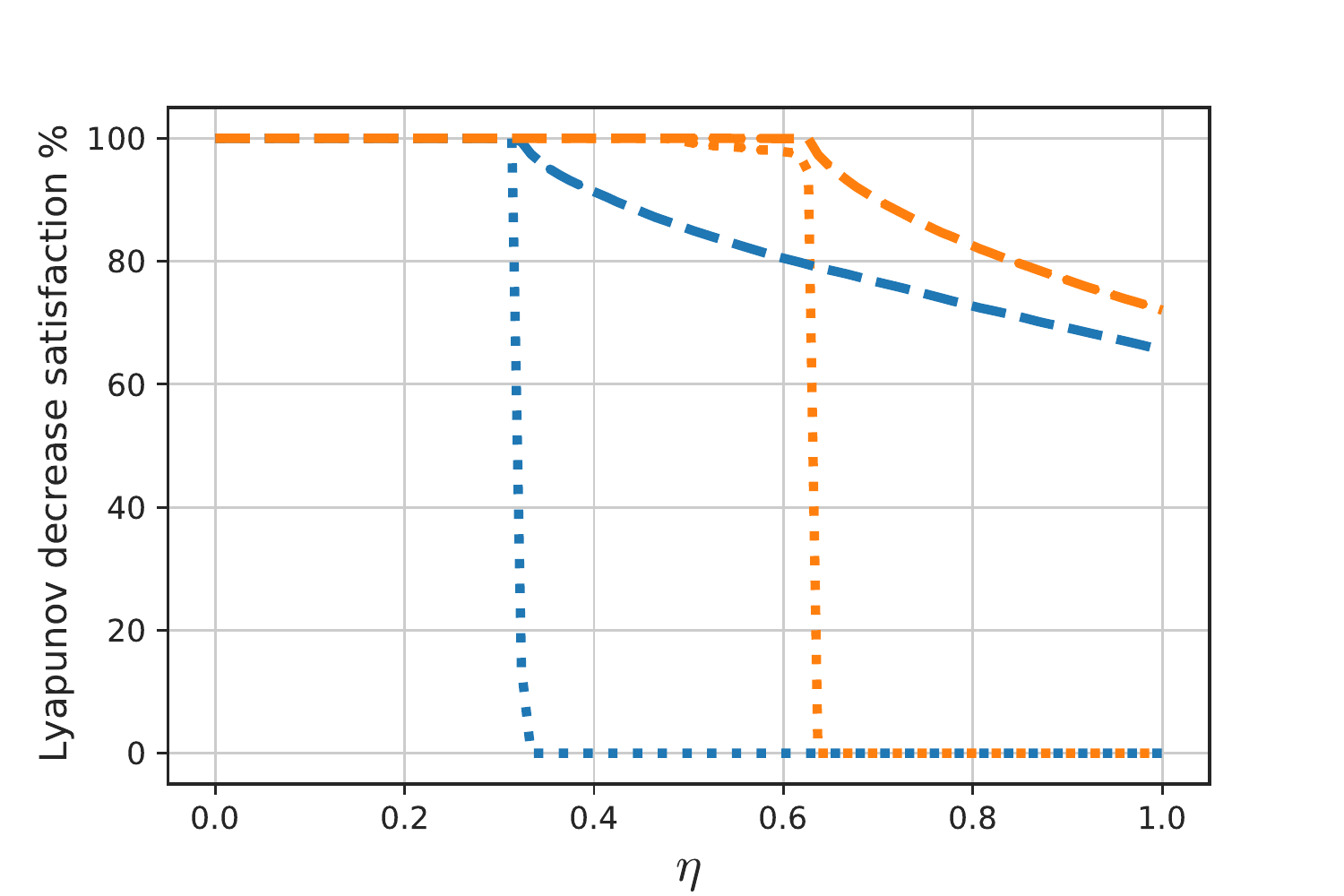}}{}
    \caption{
    Satisfaction rate of the Lyapunov decrease condition versus the exponential rate parameter $\eta$ of nominal and adversarially trained certificates $\Vnom$ and $\Vadv$ for four classes of perturbed trajectories. The percentage of trajectories and of total points satisfying the Lyapunov decrease condition for $\Vnom$ and $\Vadv$ are shown. Both $\Vnom$ and $\Vadv$ were trained with $\eta = 0.4$. Trajectories were generated by rolling out a fixed set of $1000$ initial conditions sampled from $\mathrm{Unif}\paren{[-2, 2]^2}$. \textbf{Upper left}: dynamics generated from gradient ascent on the adversarial certificate $\Vadv$, $\dotx = f(x) + \varepsilon\frac{\norm{x}}{\norm{\nabla{\Vadv}}}\nabla \Vadv$. \textbf{Upper right}: dynamics generated from a radial perturbation, $\dotx = f(x) + \varepsilon x$. \textbf{Lower left}: dynamics generated from system linearized at origin, $\dotx = J_{(0,0)} x$. \textbf{Lower right}: dynamics generated from perturbing the pendulum parameters, $\tilde{m} = 1.1$, $\tilde{\ell} = 1.1$.}
    \label{fig: satisfaction rates diff perturbs}
\end{figure}


\section*{Acknowledgements}

The authors thank Alexander Robey and Bruce D.\ Lee for various helpful discussions. Nikolai Matni is funded by NSF awards CPS-2038873, CAREER award ECCS-2045834, and a Google Research Scholar award.

\bibliographystyle{abbrvnat}
\bibliography{refs, corl_stability}

\clearpage
\appendix

\section{Proofs for Section \ref{sec: sample complexity}}

\subsection{Proof of Lemma \ref{lem: bound on L_tildh}}

The result follows by observing
\begin{align*}
    \tildh(\xi, V_1) - \tildh(\xi, V_2) &:= \sup_{\tvarphi \in \Delta_\varepsilon} \sup_{t \in \calT} \ip{\nabla V_1(\tvarphi_t(\xi)), f(\tvarphi_t(\xi)) + \delta_t} + \eta V_1(\tvarphi_t(\xi)) - \nu \\
    &\quad- \sup_{\tvarphi \in \Delta_\varepsilon} \paren{\ip{\nabla V_2(\tvarphi_t(\xi)), f(\tvarphi_t(\xi)) + \delta_t} + \eta V_2(\tvarphi_t(\xi)) - \nu} \\
    &\leq \sup_{\tvarphi \in \Delta_\varepsilon} \sup_{t \in \calT} \ip{\nabla V_1(\tvarphi_t(\xi)), f(\tvarphi_t(\xi)) + \delta_t} + \eta V_1(\tvarphi_t(\xi)) \\
    &\quad- \paren{\ip{\nabla V_2(\tvarphi_t(\xi)), f(\tvarphi_t(\xi)) + \delta_t} + \eta V_2(\tvarphi_t(\xi)) } \\
    &\leq L_h\norm{V_1 - V_2}_{\calV} + \ip{\nabla V_1(\tvarphi_t(\xi)) - \nabla V_2(\tvarphi_t(\xi)), \delta_t} \\
    &\leq L_h\norm{V_1 - V_2}_{\calV} + \norm{\nabla V_1(\tvarphi_t(\xi)) - \nabla V_2(\tvarphi_t(\xi))}_2\norm{\delta_t}_2 \\
    &\leq \paren{L_h + B_\delta} \norm{V_1 - V_2}_{\calV}. 
\end{align*}
Swapping the roles of $V_1$ and $V_2$ completes the proof.

\subsection{Proof of Lemma \ref{lemma:contraction_implies_ISS}}

We first state a few definitions.
Let $\mathsf{Sym}_{\geq 0}^{n \times n}$ denote the
space of $n \times n$ real-valued positive semi-definite matrices.
Given a Riemannian metric $M : \R^n \rightarrow \mathsf{Sym}_{\geq 0}^{n \times n}$, the geodesic distance associated with $M$ is:
\begin{align*}
    d_M(x, y) := \inf_{ \gamma \in \Gamma(x, y)} \int_0^1 \sqrt{\gamma'(s)\T M(\gamma(s)) \gamma'(s)} ds,
\end{align*}
where $\Gamma(x, y)$ denotes the set of smooth curves $\gamma$ with endpoints at $\gamma(0) = x$ and $\gamma(1) = y$.

Now, given a time-varying metric $M : \R^n \times \R \rightarrow \mathsf{Sym}_{\geq 0}^{n \times n}$,
a function $f(x, t)$ is said to be contracting in the metric
$M(x, t)$ at rate $\lambda$ if for all $x$ and $t$:
\begin{align*}
    \frac{\partial f}{\partial x}(x, t)\T M(x, t) + M(x, t) \frac{\partial f}{\partial x}(x, t) + \dot{M}(x, t) \preccurlyeq -2\lambda M(x, t).
\end{align*}

\begin{proposition}
\label{prop:contr_rob_cont}
Consider two nonlinear systems
\begin{align*}
    \dot{x}_p &= f(x_p, t) + d(x_p, t),\\
    \dot{x} &= f(x, t),
\end{align*}
where $f(x, t)$ is contracting in the metric $M(x, t)$. Then the geodesic distance $d_{M(\cdot, t)}(x_p(t), x(t))$ satisfies
the differential inequality
\begin{align*}
    \frac{d}{dt}d_{M(\cdot, t)}(x_p(t), x(t)) \leq - \lambda d_{M(\cdot, t)}(x_p(t), x(t)) + \norm{\Theta(x_p(t), t) d(x_p(t), t)}_2,
\end{align*}
where $M(x, t) = \Theta(x, t)\T \Theta(x, t)$.
\end{proposition}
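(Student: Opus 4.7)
My plan is to derive the differential inequality via an upper Dini derivative argument, combining the classical contraction property of the nominal system with a first-order analysis of the disturbance effect. The key idea is to introduce an auxiliary trajectory $\tilde x$ that agrees with $x_p$ at time $t$ but evolves under the nominal dynamics thereafter; this decouples the effect of the perturbation $d$ (handled to first order) from the effect of the vector field $f$ (handled by contraction).

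Concretely, first I would define $\tilde x(\tau)$ for $\tau \geq t$ by $\dot{\tilde x} = f(\tilde x, \tau)$ with $\tilde x(t) = x_p(t)$. A Taylor expansion gives
\begin{align*}
x_p(t+h) - \tilde x(t+h) = h\, d(x_p(t), t) + o(h).
\end{align*}
Next, apply the triangle inequality
\begin{align*}
d_{M(\cdot, t+h)}(x_p(t+h), x(t+h)) \leq d_{M(\cdot, t+h)}(x_p(t+h), \tilde x(t+h)) + d_{M(\cdot, t+h)}(\tilde x(t+h), x(t+h))
\end{align*}
and bound each term separately. For the first term, since the two points are $O(h)$-close, a first-order expansion of the geodesic distance (using smoothness of $M$) yields
\begin{align*}
d_{M(\cdot, t+h)}(x_p(t+h), \tilde x(t+h)) = h\, \norm{\Theta(x_p(t), t) d(x_p(t), t)}_2 + o(h).
\end{align*}
For the second term, both $\tilde x$ and $x$ satisfy the nominal dynamics, so I would invoke the standard Lohmiller--Slotine contraction result in time-varying metrics: under the given Riccati-type bound on $f$ and $M$, any two nominal trajectories $y_1, y_2$ satisfy $\tfrac{d}{dt} d_{M(\cdot, t)}(y_1, y_2) \leq -\lambda d_{M(\cdot, t)}(y_1, y_2)$. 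Since $\tilde x(t) = x_p(t)$, integrating from $t$ to $t+h$ yields $d_{M(\cdot, t+h)}(\tilde x(t+h), x(t+h)) \leq e^{-\lambda h} d_{M(\cdot, t)}(x_p(t), x(t))$. Combining the two bounds, subtracting $d_{M(\cdot, t)}(x_p(t), x(t))$, dividing by $h$, and letting $h \to 0^+$ produces the claimed differential inequality in the sense of the upper right Dini derivative; under the stated smoothness it coincides with the ordinary derivative.

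The main obstacle is the classical contraction inequality itself, which I would prove as follows. Parametrize a smooth family of nominal trajectories $x(s, t)$ with $s \in [0, 1]$ along a minimizing geodesic between $y_1$ and $y_2$ at the reference time, let $\delta x := \partial_s x$, and compute
\begin{align*}
\frac{d}{dt}\bigl(\delta x\T M(x, t) \delta x\bigr) = \delta x\T \biggl( \frac{\partial f}{\partial x}\T M + M \frac{\partial f}{\partial x} + \dot M \biggr) \delta x \leq -2 \lambda\, \delta x\T M \delta x,
\end{align*}
using the variational equation $\dot{\delta x} = (\partial_x f)(x, t) \delta x$ together with the assumed Riccati bound. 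This yields exponential decay of the infinitesimal length element $\sqrt{\delta x\T M \delta x}$, hence of the Riemannian length $\int_0^1 \sqrt{\delta x\T M \delta x}\, ds$ of the transported curve; minimizing over connecting curves gives exponential decay of the geodesic distance. A more subtle point is rigorously justifying the first-order expansion of the geodesic distance between nearby points in a time-varying metric, for which one uses that for $O(h)$-separated points the minimizing geodesic is a straight line to leading order, and the variation of $\Theta$ in both space and time contributes only $o(h)$ corrections to its length.
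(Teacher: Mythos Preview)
Your argument is correct but proceeds along a genuinely different route from the paper. The paper differentiates the Riemannian energy $E(\gamma_t)=d_M(x_p,x)^2$ directly via the first variation formula for a minimizing geodesic,
\[
\dot E(\gamma_t)=2\langle\gamma_t'(s),\dot\gamma_t(s)\rangle\big|_{s=0}^{s=1}+2\,\partial_t E,
\]
so that the contraction contribution (from the $f$ terms at both endpoints together with $\partial_t E$) and the disturbance contribution (from the $d$ term at $s=0$) appear in a single differentiation; the passage to $\tfrac{d}{dt}d_M$ then uses $\dot E=2d_M\,\tfrac{d}{dt}d_M$, Cauchy--Schwarz, and the constant-speed property $\norm{\Theta(x_p,t)\gamma_t'(0)}_2=d_M(x_p,x)$. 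Your trajectory-splitting approach instead decouples the two effects via the triangle inequality and an auxiliary nominal trajectory $\tilde x$: the disturbance is handled by a first-order Taylor expansion of the geodesic distance between $O(h)$-close points, while contraction is invoked (and separately established via the variational equation) as a black-box inequality on the nominal flow. This is more modular and avoids the first variation formula altogether, at the cost of needing the nominal contraction inequality as an explicit lemma; the paper's computation is more compact and keeps everything in one place. Both arguments rely on the same implicit regularity (existence and smoothness of minimizing geodesics, differentiability of $d_M$ away from zero), which neither verifies in detail.
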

\begin{proof}
Consider a geodesic $\gamma_t(s) : [0, 1] \rightarrow \R^n$ from $x_p(t)$ to $x(t)$ so that $\gamma_t(0) = x_p(t)$ and $\gamma_t(1) = x(t)$, and let $\gamma_t'$ denote the derivative of $\gamma_t$ with respect to its argument. Observe that the Riemannian energy is $E(\gamma_t) = d_M(x_p(t), x(t))^2$, and hence $\dot{E}(\gamma_t) = 2 d_M(x_p(t), x(t)) \left(\frac{d}{dt}d_M(x_p(t), x(t))\right)$. From the formula for the first variation of the Riemannian energy for a minimizing geodesic, 
\begin{equation*}
    \dot{E}(\gamma_t) = 2\ip{\gamma_t'(s),\dot{\gamma}_t(s)}|_{s=0}^{s=1} + 2 \frac{\partial E}{\partial t},
\end{equation*}
where $\dot{\gamma}_t$ denotes the time derivative of $\gamma_t$ along the flow of $x_p(t)$, and $\ip{\cdot,\cdot}$ denotes the Riemannian inner product. Note that while the functional form of $\gamma_t(s)$ is unknown, the values of its time derivative at the endpoints are fixed to be $\dot{x}_p$ and $\dot{x}$ for $s=0$ and $s=1$, respectively. From this, we find that
\begin{align*}
    \dot{E}(\gamma_t) &= 2\ip{\gamma'_t(1),f(x)} - 2\ip{\gamma'_t(0),f(x_p) + d(x_p, t)} + 2 \frac{\partial E}{\partial t}\\
    &\leq - 2\lambda E(\gamma_t) - 2\ip{\gamma'_t(0),d(x_p, t)},
\end{align*}
where the inequality stems from contraction of the nominal dynamics $f(x, t)$. This relation then implies the decrease condition
\begin{align*}
    \frac{d}{dt}d_{M(\cdot, t)}(x_p(t), x(t)) &\leq -\lambda d_{M(\cdot, t)}(x_p(t), x(t)) - \frac{1}{d_{M(\cdot, t)}(x_p(t), x(t))}\gamma'_t(0)\T\Theta(x_p)\T\Theta(x_p) d(x_p, t),\\
    &\leq -\lambda d_{M(\cdot, t)}(x_p(t), x(t)) + \frac{\norm{\Theta(x_p, t)\gamma'_t(0)}_2}{d_{M(\cdot, t)}(x_p(t), x(t))}\norm{\Theta(x_p, t) d(x_p, t)}_2.
\end{align*}
To complete the proof, observe that geodesics have constant energy, so that $\norm{\Theta(x_p, t)\gamma'_t(0)}_2 = d_{M(\cdot, t)}(x_p(t), x(t))$.
\end{proof}

We can now prove Lemma~\ref{lemma:contraction_implies_ISS}.
By Proposition~\ref{prop:contr_rob_cont}
and the assumption that $M(x, t) \preccurlyeq L I$,
\begin{align*}
    \frac{d}{dt} d_{M(\cdot, t)}(x_p(t), x(t)) &\leq -\lambda d_{M(\cdot, t)}(x_p(t), x(t)) + \norm{\Theta(x(t), t) d(x(t), t)}_2 \\
    &\leq -\lambda d_{M(\cdot, t)}(x_p(t), x(t)) + \sqrt{L} \norm{d(x_p(t), t)}_2.
\end{align*}
By the comparison lemma,
\begin{align*}
    d_{M(\cdot, t)}(x_p(t), x(t)) \leq d_{M(\cdot, 0)}(x_p(0), x(0)) e^{-\lambda t} + \sqrt{L}\int_0^t e^{-\lambda (t-s)} \norm{d(x_p(s), s)}_2 ds.
\end{align*}
Next, by the assumption that $\mu I \preccurlyeq M(x, t) \preccurlyeq L I$,
it is not hard to see (see e.g.\ Proposition~D.2 of \citet{boffi2020regret}) that
for all $x,y,t$,
\begin{align*}
    \sqrt{\mu} \norm{x-y}_2 \leq d_{M(\cdot, t)}(x, y) \leq \sqrt{L} \norm{x-y}_2
\end{align*}
Combining these inequalities, we have:
\begin{align*}
    \sqrt{\mu}\norm{x_p(t) - x(t)}_2 \leq \sqrt{L} \norm{x_p(0) - x(0)}_2 e^{-\lambda t} +  \sqrt{L}\int_0^t e^{-\lambda (t-s)} \norm{d(x_p(s), s)}_2 ds.
\end{align*}
The claim now follows by dividing both sides by $\sqrt{\mu}$.

\subsection{Proof of Theorem \ref{thm: CT EdISS rad complexity bound}}

We observe that for an arbitrary perturbation tube $\Delta(\xi)$
\begin{align*}
    \tildh(\xi, V) - h(\xi, V) &= \sup_{\tvarphi \in \Delta(\xi)} \sup_{t \in \calT} \ip{\nabla V(\tvarphi_t(\xi)), f(\tvarphi_t(\xi)) + \delta_t} + \eta V(\tvarphi_t(\xi)) - \nu \\
    &\quad - \sup_{t \in \calT} \paren{\ip{\nabla V(\varphi_t(\xi)), f(\varphi_t(\xi))} + \eta V(\varphi_t(\xi)) } \\
    &\leq \sup_{\tvarphi \in \Delta(\xi)} \sup_{t \in \calT} \ip{\nabla V(\tvarphi_t(\xi)), f(\tvarphi_t(\xi)) + \delta_t} + \eta V(\tvarphi_t(\xi)) - \nu \\
    &\quad -  \paren{\ip{\nabla V(\varphi_t(\xi)), f(\varphi_t(\xi))} + \eta V(\varphi_t(\xi)) } \\
    &\leq \sup_{\tvarphi \in \Delta(\xi)} \sup_{t \in \calT} \;(L_{\nabla V} + \eta L_{V})\norm{\tvarphi_t(\xi) - \varphi_t(\xi)}_2 + B_{\nabla V} \norm{\delta_t}_2 - \nu,
\end{align*}
and thus
\begin{equation}\label{eq: CT tildh - h bound}
    \abs{\tildh(\xi, V) - h(\xi, V)} \leq \sup_{\tvarphi \in \Delta(\xi)} \sup_{t \in \calT} \;(L_{\nabla V} + \eta L_{V})\norm{\tvarphi_t(\xi) - \varphi_t(\xi)}_2 + B_{\nabla V} \norm{\delta_t}_2 + \nu
\end{equation}

\begin{itemize}
\item \textbf{Norm-bounded Adversary} $\Delta_{\varepsilon}^u$: from E$\delta$ISS, we know that the deviation can be bounded by:
\begin{align*}
    \norm{\varphi_t - \tvarphi_t}_2 &\leq \beta \norm{\varphi_0 - \tvarphi_0} e^{-\rho t} + \gamma \int_0^t e^{-\rho(t-s)} \norm{\delta_s}_2 \,ds \\
    &\leq \gamma \varepsilon \int_0^t e^{-\rho(t - s)} \;ds \\
    &\leq \gamma \varepsilon \rho^{-1}.
\end{align*}
Plugging this back into \eqref{eq: CT tildh - h bound}, we get
\begin{align*}
    \abs{\tildh(\xi, V) - h(\xi, V)} &\leq \paren{L_{\nabla V} + \eta L_V}\gamma\varepsilon\rho^{-1} + B_{\nabla V} \varepsilon + \nu.
\end{align*}
Applying \eqref{eq: rad complex tildh - h} yields the desired result.

\item \textbf{Lipschitz Adversary} $\Delta_{\varepsilon}^x$: from E$\delta$ISS, we can bound the deviation by:
\begin{align*}
    \norm{\varphi_t - \tvarphi_t}_2 &\leq \beta \norm{\varphi_0 - \tvarphi_0} e^{-\rho t} + \gamma \int_0^t e^{-\rho(t-s)} \norm{\delta(\tvarphi_s)}_2 \,ds \\
    &\leq \gamma \varepsilon \int_0^t e^{-\rho(t - s)}\norm{\tvarphi_s}_2 \;ds \\
    &\leq \gamma \varepsilon \int_0^t e^{-\rho(t - s)}\paren{\norm{\varphi_s - \tvarphi_s}_2 + \norm{\varphi_s}_2} \;ds \\ 
    &\leq  \gamma \varepsilon \int_0^t e^{-\rho(t - s)}\norm{\varphi_s - \tvarphi_s}_2 \;ds  + \gamma \varepsilon \int_0^t e^{-\rho(t - s)}\beta \norm{\xi}_2 e^{-\rho s} \;ds \\
    &= \gamma \varepsilon \int_0^t e^{-\rho(t - s)}\norm{\varphi_s - \tvarphi_s}_2 \;ds  + \gamma \varepsilon \beta \norm{\xi}_2 t e^{-\rho t}.
\end{align*}
Taking the supremum over $t$ on both sides, we have
\begin{align*}
    \sup_{t \in \calT} \norm{\varphi_t - \tvarphi_t}_2 &\leq \sup_{t \in \calT} \gamma\varepsilon \int_0^t e^{-\rho(t - s)}\norm{\varphi_s - \tvarphi_s}_2 \;ds  + \gamma \varepsilon \beta \norm{\xi}_2 t e^{-\rho t} \\
    &\leq \gamma\varepsilon \sup_{t \in \calT} \int_0^t e^{-\rho(t - s)}\sup_{s \in \calT} \norm{\varphi_s - \tvarphi_s}_2 \;ds  + \sup_{t \in \calT} \gamma \varepsilon \beta \norm{\xi}_2 t e^{-\rho t} \\
    &\leq \gamma\varepsilon \rho^{-1} \sup_{s \in \calT} \norm{\varphi_s - \tvarphi_s}_2 + \gamma \varepsilon \rho^{-1} \beta \norm{\xi}_2 e^{-1},
\end{align*}
where the second term in the last line comes from optimizing $\max_{t} t e^{-\rho t}$, which attains its maximum $\frac{1}{\rho e}$ at $t = 1/\rho$. Since by assumption, $\frac{\gamma \varepsilon}{\rho} < 1$, we have
\begin{align*}
    \sup_{t \in \calT} \norm{\varphi_t - \tvarphi_t}_2 &\leq \frac{\gamma\varepsilon\rho^{-1}}{1 - \gamma\varepsilon\rho^{-1}} \beta e^{-1} \norm{\xi}_2.
\end{align*}
Plugging this into \eqref{eq: CT tildh - h bound}, we get
\begin{align*}
    \abs{\tildh(\xi, V) - h(\xi, V)} &\leq  \sup_{\tvarphi \in \Delta_\varepsilon(\xi)} \sup_{t\in \calT}\; \paren{L_{\nabla V} + \eta L_V}\norm{\tvarphi_t(\xi) - \varphi_t(\xi)}_2 + B_{\nabla V} \norm{\delta(\tvarphi_t(\xi))}_2  + \nu\\
    &\leq \sup_{\tvarphi \in \Delta_\varepsilon(\xi)} \sup_{t\in\calT}\; \paren{L_{\nabla V} + \eta L_V}\norm{\tvarphi_t - \varphi_t}_2 +  B_{\nabla V} \varepsilon \paren{\norm{\varphi_t - \tvarphi_t}_2 + \norm{\tvarphi_t}_2}  + \nu\\
    &\leq \paren{L_{\nabla V} + \eta L_V + B_{\nabla V}\varepsilon} \frac{\gamma\varepsilon\rho^{-1}}{1 - \gamma\varepsilon\rho^{-1}} \beta e^{-1} \norm{\xi}_2 + \sup_{t} B_{\nabla V} \beta \varepsilon\norm{\xi}_2 e^{-\rho t}  + \nu\\
    &\leq \brac{\paren{L_{\nabla V} + \eta L_V + B_{\nabla V}\varepsilon} \frac{\gamma\varepsilon\rho^{-1}}{1 - \gamma\varepsilon\rho^{-1}}e^{-1} + B_{\nabla V}}\beta\varepsilon \norm{\xi}_2  + \nu.
\end{align*}
Applying \eqref{eq: rad complex tildh - h} yields the desired result.

\item \textbf{Combined Adversary} $\Delta_{\varepsilon_x, \varepsilon_u}^{x,u}$: proof follows similarly to the Lipschitz adversary case. Using E$\delta$ISS, we have
\begin{align*}
    \norm{\varphi_t - \tvarphi_t}_2 &\leq \beta \norm{\varphi_0 - \tvarphi_0} e^{-\rho t} + \gamma \int_0^t e^{-\rho(t-s)} \norm{\delta(\tvarphi_s) + \delta_t}_2 \,ds \\
    &\leq \gamma \varepsilon_x \int_0^t e^{-\rho(t - s)}\norm{\tvarphi_s}_2 \;ds + \gamma \varepsilon_u \rho^{-1} \\
    &= \gamma \varepsilon_x \int_0^t e^{-\rho(t - s)}\norm{\varphi_s - \tvarphi_s}_2 \;ds  + \gamma \varepsilon_x \beta \norm{\xi}_2 t e^{-\rho t} +  \gamma \varepsilon_u \rho^{-1}.
\end{align*}
Since $\gamma\varepsilon_x < \rho$, we take the supremum of both sides and shift terms around to get
\begin{align*}
    \sup_{t \in \calT} \norm{\varphi_t - \tvarphi_t}_2 &\leq \frac{\gamma \varepsilon_u \rho^{-1} + \gamma\varepsilon_x\rho^{-1}\beta e^{-1} \norm{\xi}_2 }{1 - \gamma\varepsilon_x\rho^{-1}},
\end{align*}
and thus plugging into \eqref{eq: CT tildh - h bound}, we get
\begin{align*}
    \abs{\tildh(\xi, V) - h(\xi, V)}  &\leq \paren{L_{\nabla V} + \eta L_V + B_{\nabla V}\varepsilon_{x}} \frac{\gamma\varepsilon_u \rho^{-1} +  \gamma \varepsilon_x\rho^{-1}e^{-1}\beta\varepsilon_x \norm{\xi}_2 }{1 - \gamma\varepsilon_x\rho^{-1}} \\
    &\quad + B_{\nabla V} \beta\varepsilon_x \norm{\xi}_2  + B_{\nabla V}\varepsilon_u + \nu.
\end{align*}
Applying \eqref{eq: rad complex tildh - h} yields the desired result.

\end{itemize}

\section{Adversarially Robust Certificates in Discrete Time}\label{sec: DT results}

In the discrete time setting, we consider the system $x_{t+1} = f(x_t)$. Like in the continuous time case, $f$ is continuous and unknown, the state $x \in \R^p$ is fully observed, the initial conditions $\xi$ are drawn from some compact set $\calX$, and $\varphi_t(\xi)$ denotes the map to the state at time $t$ given initial condition $\xi$. Given a candidate Lyapunov function $V$, we define the nominal and adversarial (exponential) Lyapunov decrease conditions as:
\begin{align*}
    h(\xi, V) &= \max_{t \leq T} V\paren{\varphi_{t+1}(\xi)} - \eta^2 V\paren{\varphi_t(\xi)} \\
    \tilde{h}_\nu(\xi, V) &= \max_{\tvarphi \in \Delta_\varepsilon(\xi)} \max_{t \leq T} V\paren{\tvarphi_{t+1}(\xi)} - \eta^2 V\paren{\tvarphi_t(\xi)} - \nu,\quad \nu \geq 0,
\end{align*}
where $0 < \eta < 1$, as well as the corresponding loss classes $\calH$ and $\tilde{\calH}$. The stability certification problem can be posed as the following feasibility problem
\begin{equation}\label{eq: DT feasibility}
    \mathrm{Find}_{V \in \calV}\;\mathrm{s.t.} \;\; \tilde{h}_\nu(\xi, V) \leq -\tau,\quad i = 1,\dots,n.
\end{equation}
We make the following assumptions.
\begin{assumption}[Discrete-time stability in the sense of Lyapunov]\label{assumption: DT Stability isL} Fix a perturbation set $\Delta(\cdot)$. There exists a compact set $S \subseteq \R^p$ such that $\tvarphi_t(\xi) \in S$ for all $\xi \in \calX$, $t \leq T$, and $\tvarphi_t(\xi) \in \Delta(\xi)$.
\end{assumption}

\begin{assumption}[Regularity of $\calV$]\label{assumption: DT regularity of V}
    There exist a constant $L_V$ such that for every $V \in \calV$, the map $x \to V(x)$ over $x \in S$ is $L_V$-Lipschitz.
\end{assumption}
Under Assumptions~\ref{assumption: DT Stability isL} and~\ref{assumption: DT regularity of V}, and the continuity of the dynamics $f(x)$, there exist constants $B_V$ and $B_{\tildh}$ such that
\begin{align*}
    \sup_{V \in \calV} \sup_{x \in S} \abs{V(x)} \leq B_V,\;\sup_{V \in \calV} \sup_{\xi \in \calX} \abs{\tildh(\xi, V)} \leq B_{\tildh}.
\end{align*}
Finally, let $\norm{V}_\calV := \sup_{x \in S} \abs{V(x)}$ denote the supremum norm on the space $\calV$. Under Assumptions~\ref{assumption: DT Stability isL} and~\ref{assumption: DT regularity of V}, and the discrete time definition of $\tildh$, Lemma~\ref{lem: advrobust gen bound} holds in precisely the same form, such that we once again need only to bound the Rademacher complexity of the adversarial loss class. We now prove the discrete-time variant of the adversary-agnostic Rademacher complexity bound.

\begin{lemma}[Discrete-time analogue of Lemma~\ref{lem: bound on L_tildh}]
Suppose that Assumptions~\ref{assumption: DT Stability isL} and~\ref{assumption: DT regularity of V} hold. Let $L_h$ denote any constant such that
    $\abs{h(\xi, V_1) - h(\xi, V_2)} \leq L_h \norm{V_1-V_2}_{\calV}$
    for all $\xi \in X$ and $V_1,V_2 \in \calV$.
    Then, $L_{\tildh} \leq L_h + 2$.
\end{lemma}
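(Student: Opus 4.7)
The proof mirrors the continuous-time argument of Lemma~\ref{lem: bound on L_tildh}, but the adversary contribution is absorbed trivially rather than through a gradient pairing. The starting point is the elementary inequality $\sup A - \sup B \leq \sup (A - B)$ when both suprema range over the same set. Applying this to $\tilde h(\xi, V_1) - \tilde h(\xi, V_2)$, the $-\nu$ term cancels and I obtain
\[
\tilde h(\xi, V_1) - \tilde h(\xi, V_2) \leq \sup_{\tilde \varphi \in \Delta_\varepsilon(\xi)} \sup_{t \leq T}\Big[ (V_1 - V_2)(\tilde \varphi_{t+1}) - \eta^2 (V_1 - V_2)(\tilde \varphi_t) \Big].
\]

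The remaining step is to bound this pointwise expression. I would apply the triangle inequality to split it into $|(V_1 - V_2)(\tilde \varphi_{t+1})| + \eta^2 |(V_1 - V_2)(\tilde \varphi_t)|$ and then invoke Assumption~\ref{assumption: DT Stability isL}, which guarantees that every point on the perturbed trajectory (both $\tilde\varphi_t$ and $\tilde\varphi_{t+1}$) lies in the compact set $S$. Hence each term is at most $\sup_{x \in S}|V_1(x) - V_2(x)| = \|V_1 - V_2\|_\calV$, yielding the pointwise bound $(1+\eta^2)\|V_1 - V_2\|_\calV$. Because the discrete-time setting imposes $\eta \in (0,1)$, this is strictly less than $2\|V_1-V_2\|_\calV$, which, since $L_h \geq 0$, is in turn at most $(L_h + 2)\|V_1 - V_2\|_\calV$.

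Swapping the roles of $V_1$ and $V_2$ upgrades the inequality to the two-sided bound $|\tilde h(\xi,V_1) - \tilde h(\xi, V_2)| \leq (L_h + 2)\|V_1 - V_2\|_\calV$ uniformly over $\xi \in \calX$, which is precisely the stated Lipschitz estimate. An alternative, more structurally parallel decomposition adds and subtracts $(V_1 - V_2)(f(\tilde \varphi_t))$, so that the nominal one-step decrement contribution (bounded via $L_h$) and the adversarial deviation contribution (bounded by $2\|V_1-V_2\|_\calV$ using invariance of $S$) separate cleanly; this matches the $L_h + B_\delta$ structure of the continuous-time proof but yields the same final bound.

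The main obstacle is really just a bookkeeping one: unlike the continuous-time case, where the adversary $\delta_t$ appears explicitly via $\langle \nabla V, \delta\rangle$ and one needs the uniform adversary bound $B_\delta$, in discrete time the adversary has already been absorbed into the next-state evaluation $V(\tilde\varphi_{t+1})$, so no gradient-Lipschitz constant or adversary-norm constant is required. One minor point to verify is that $\tilde\varphi_{t+1} \in S$ for all $t$ appearing in the supremum (this follows from taking the indexing convention in Assumption~\ref{assumption: DT Stability isL} to cover $t \leq T+1$, or from forward invariance of $S$ under admissible perturbed dynamics), but otherwise the argument is essentially immediate.
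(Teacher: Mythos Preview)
Your proof is correct. In fact your direct route yields the sharper Lipschitz constant $1+\eta^2 < 2$, which you then relax to $L_h + 2$ only to match the stated lemma; the $L_h$ term is not actually needed.

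The paper's proof takes the ``structurally parallel'' decomposition you describe at the end: it adds and subtracts $V_i(f(\tilde\varphi_t))$ so that the expression splits into a nominal one-step piece $V_1(f(\tilde\varphi_t)) - \eta^2 V_1(\tilde\varphi_t) - \bigl(V_2(f(\tilde\varphi_t)) - \eta^2 V_2(\tilde\varphi_t)\bigr)$, bounded via $L_h$, plus two residual terms $(V_1-V_2)(\tilde\varphi_{t+1}) - (V_1-V_2)(f(\tilde\varphi_t))$, each bounded by $\|V_1-V_2\|_\calV$ using invariance of $S$. This mirrors the continuous-time $L_h + B_\delta$ structure but is strictly weaker than your direct bound, since it incurs the $L_h$ term unnecessarily. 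Your observation that in discrete time the adversary is already absorbed into $\tilde\varphi_{t+1}$, so no separate $B_\delta$-type term is needed, is exactly the reason the direct argument works and is cleaner here. The bookkeeping caveat you raise about $\tilde\varphi_{t+1} \in S$ is the only point requiring care, and the paper implicitly handles it the same way.
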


\begin{proof}
Writing out $\tildh(\xi, V_1) - \tildh(\xi, V_2)$, for arbitrary $\xi$, $V_1$, $V_2$, we have:
\begin{align*}
    \tilde{h}(\xi, V_1) - \tilde{h}(\xi, V_2) &= \max_{\tvarphi \in \Delta(\xi)} \max_{t \leq T} V_1(\tvarphi_{t+1}) - \eta^2 V_1(\tvarphi_t) - \max_{\tvarphi  \in \Delta(\xi)} \max_{t \leq T} \paren{V_2(\tvarphi_{t+1}) - \eta^2 V_2(\tvarphi_t)} \\
    &\leq \max_{\tvarphi  \in \Delta(\xi)} \max_{t \leq T} V_1(\tvarphi_{t+1}) - \eta^2 V_1(\tvarphi_t) - \paren{V_2(\tvarphi_{t+1}) - \eta^2 V_2(\tvarphi_t)}.
\end{align*}
For any time $t$, we have
\begin{align*}
    V_1(\tvarphi_{t+1}) - \eta^2 V_1(\tvarphi_t) - \paren{V_2(\tvarphi_{t+1}) - \eta^2 V_2(\tvarphi_t)} &= V_1(f(\tvarphi_t)) - \eta^2 V_1(\tvarphi_t) - \paren{V_2(f(\tvarphi_t)) - \eta^2 V_2(\tvarphi_t)} \\
    &\quad + \paren{V_1(\tvarphi_{t+1}) - V_2(\tvarphi_{t+1})} \\
    &\quad - \paren{V_1(f(\tvarphi_t)) - V_2(f(\tvarphi_t))} \\
    &\leq \abs{h(\xi, V_1) - h(\xi, V_2)} + 2 \sup_{x \in S} \abs{V_1(x) - V_2(x)} \\
    &\leq L_h \norm{V_1 - V_2}_{\calV} + 2 \norm{V_1 - V_2}_{\calV},
\end{align*}
where added and subtracted $V_1(f(\tvarphi_t))$ and $V_2(f(\tvarphi_t))$, and used Assumption~\ref{assumption: bounded adversary traj} to bound the leftover terms using the definition of $\norm{\cdot}_\calV$. The argument is symmetric for $V_1, V_2$, so we have
\[
\abs{\tilde{h}(\xi, V_1) - \tilde{h}(\xi, V_2)} \leq (L_h + 2) \norm{V_1 - V_2}_{\calV},
\]
for all $\xi \in X$ and $V_1, V_2 \in \calV$.
\end{proof}

We note this is at first glance a better bound than the continuous-time version. This can be attributed to the fact that Assumption~\ref{assumption: DT Stability isL} is at face value more restrictive than its continuous-time analogue Assumption~\ref{assumption: bounded adversary traj}, since it implicitly enforces a norm-constraint on $\delta_t$ such that it cannot push $x_t$ out of the compact set $S$, which is a property independent of the certificate $V$. On the other hand, Assumption~\ref{assumption: bounded adversary traj} does not immediately enforce a norm-constraint on $\delta_t$--the implicit constraint depends on the choice of $V$, where $\delta_t$ cannot render optimization problem~\eqref{eq: empirical advrobust feasibility} infeasible. As Assumption~\ref{assumption: DT Stability isL} is in a sense more restrictive than Assumption~\ref{assumption: bounded adversary traj}, the bound we get is stronger.

We now re-introduce the norm-bounded, Lipschitz, and combined adversarial trajectory tubes in discrete time:
\begin{align}
    &\Delta^{u}_\varepsilon(\xi) := \curly{\tvarphi: \tilde{\varphi}_{t+1} = f(\tilde{\varphi}_t) + \delta_t,\; \tilde{\varphi}_0 = \xi, \; \norm{\delta_t}_2\leq \varepsilon} \label{def: DT norm-bounded adversary}\\
    &\Delta^x_\varepsilon(\xi) := \curly{\tvarphi: \tilde{\varphi}_{t+1} = f(\tilde{\varphi}_t) + \delta(\tilde{\varphi}_t),\; \tilde{\varphi}_0 = \xi, \; \norm{\delta(\tilde{\varphi}_t)}_2 \leq \varepsilon \norm{\tilde{\varphi}_t}_2} \label{def: DT lipschitz adversary} \\
    &\Delta^{x,u}_{\varepsilon_x,\varepsilon_u}(\xi) := \curly{\tvarphi: \tvarphi_{t+1} = f(\tvarphi_t) + \delta^x(\tvarphi_t) + \delta^u_t,\; \tvarphi_0 = \xi,\; \norm{\delta^x(\tvarphi_t)}_2 \leq \varepsilon_x \norm{\tvarphi_t}_2,\; \norm{\delta^u_t}_2 \leq \varepsilon_u }. \label{def: DT combined adversary}
\end{align}
Accordingly, we introduce $(\beta,\rho,\gamma)$-E$\delta$ISS in discrete time.
\begin{definition}[Discrete-time $(\beta, \rho, \gamma)$-E$\delta$ISS]
Let $\beta, \gamma > 0$ be positive constants and $\rho \in (0,1)$. A discrete-time dynamical system $f(x,t)$ is $(\beta, \rho, \gamma)$-exponential-incrementally-input-to-state stable if for every pair of initial conditions $(x_0,y_0)$ and signal $u_t$ (which can depend causally on $x,y$), the trajectories $x_{t+1} = f(x_t,t)$ and $y_{t+1} = f(y_t,t) + u_t$ satisfy for all $t \geq 0$:
\begin{equation}\label{def: DT EdISS}
    \norm{x_t - y_t}_2 \leq \beta \rho^t \norm{x_0 - y_0}_2 + \gamma \sum_{k=0}^{t-1}\rho^{t-1-k}\norm{u_k}_2.
\end{equation}

\end{definition}
As mentioned earlier in this paper, Proposition 5.3 of \citep{boffi2020regret} gives that a discrete-time system contracting with respect to some metric is $(\beta, \rho, \gamma)$-E$\delta$ISS. We are now ready to provide the discrete-time analogues to the adversarial Rademacher complexities provided in Theorem~\ref{thm: CT EdISS rad complexity bound}.
\begin{theorem}[Discrete-time analogue to Theorem~\ref{thm: CT EdISS rad complexity bound}]\label{thm: DT edISS rad complexity bound}
Put $B_X := \sup_{\xi \in \calX} \norm{\xi}_2$, let Assumption \ref{assumption: DT regularity of V} hold, and assume that the nominal discrete-time system $f(x)$ is $(\beta, \rho, \gamma)$-E$\delta$ISS. Then for
\begin{itemize}[noitemsep,topsep=1pt,parsep=0pt,partopsep=0pt]
\item adversarial trajectories drawn from the norm-bounded tube $\Delta^u_\varepsilon(\xi)$ defined in~\eqref{def: DT norm-bounded adversary}, Assumption~\ref{assumption: DT Stability isL} holds and
\begin{align}
    \calR_n(\tilde{\calH}) &\leq \calR_n(\calH) +  \brac{(1 + \eta^2)L_V\frac{\gamma \varepsilon}{1 - \rho}  + \nu}\frac{1}{\sqrt{n}},
\end{align}
\item adversarial trajectories drawn from the Lipschitz tube $\Delta^x_\varepsilon(\xi)$ defined in~\eqref{def: DT lipschitz adversary}, if $\varepsilon > 0$ is small enough such that $\rho + \gamma \varepsilon < 1$, then Assumption~\ref{assumption: DT Stability isL} holds and
\begin{align}
    \calR_n(\tilde{\calH}) &\leq \calR_n(\calH) + \brac{L_V \beta B_X (\rho + \gamma\varepsilon + \eta^2) + \nu} \frac{1}{\sqrt{n}},
\end{align}
\item adversarial trajectories drawn from the combined tube $\Delta^{x,u}_{\varepsilon_x, \varepsilon_u}$ defined in~\eqref{def: DT combined adversary}, if $\varepsilon_x > 0$ is small enough such that $\rho + \gamma \varepsilon_x < 1$, then Assumption~\ref{assumption: DT Stability isL} holds and
\begin{align}
    \calR_n(\tilde{\calH}) &\leq \calR_n(\calH) +  \bigg[(1 + \eta^2)L_V \paren{\frac{1 - \rho}{1 - (\rho + \gamma \varepsilon_x)} \frac{\gamma \beta B_X \varepsilon_x }{e \rho \log\paren{\rho^{-1}}}  + \frac{\gamma \varepsilon_u}{1 - \rho } }+ \nu\bigg] \frac{1}{\sqrt{n}}.
\end{align}
\end{itemize}
\end{theorem}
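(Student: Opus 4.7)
The plan is to mirror the continuous-time proof of Theorem~\ref{thm: CT EdISS rad complexity bound}, replacing integrals with sums and the continuous E$\delta$ISS inequality with its discrete counterpart~\eqref{def: DT EdISS}. First I would invoke~\eqref{eq: rad complex tildh - h}, which holds verbatim in discrete time since it only uses standard Rademacher calculus, to reduce the task to uniformly bounding $|\tildh(\xi, V) - h(\xi, V)|$ over $\xi \in \calX$ and $V \in \calV$.

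For the baseline inner bound, I would write the difference of maxima as a single maximum over a common $(\tvarphi,t)$ pair, telescope $V(\tvarphi_{t+1}) - \eta^2 V(\tvarphi_t) - V(\varphi_{t+1}) + \eta^2 V(\varphi_t)$, and apply Assumption~\ref{assumption: DT regularity of V} termwise to obtain
\[|\tildh(\xi, V) - h(\xi, V)| \leq (1+\eta^2) L_V \sup_{\tvarphi \in \Delta(\xi)} \sup_t \|\tvarphi_t - \varphi_t\|_2 + \nu,\]
where the matching lower bound $h - \tildh \leq \nu$ comes from the fact that $\tvarphi = \varphi$ (i.e., $\delta \equiv 0$) is an admissible perturbation. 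I would then control $\sup_t \|\tvarphi_t - \varphi_t\|_2$ via~\eqref{def: DT EdISS}: for the norm-bounded adversary the input satisfies $\|\delta_k\|_2 \leq \varepsilon$ and the geometric sum $\sum \rho^{t-1-k}\varepsilon$ immediately yields $\gamma \varepsilon/(1-\rho)$; for the Lipschitz and combined adversaries the input $\|\delta(\tvarphi_k)\|_2 \leq \varepsilon \|\tvarphi_k\|_2$ is self-referential, so I would split $\|\tvarphi_k\|_2 \leq \|\tvarphi_k - \varphi_k\|_2 + \beta B_X \rho^k$ (using \eqref{def: DT EdISS} to compare $\varphi$ against the equilibrium at the origin), take $\sup_t$ on both sides, and solve the resulting contractive fixed-point inequality. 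The discrete maximizer $\sup_t t \rho^{t-1} = 1/(e \rho \log(\rho^{-1}))$, attained at $t^\star = 1/\log(\rho^{-1})$, produces the $1/(e\rho\log(\rho^{-1}))$ factor in the combined bound, and the hypothesis $\rho + \gamma \varepsilon_x < 1$ is precisely what makes the fixed-point step contractive; the same induction shows $\|\tvarphi_t\|_2$ is uniformly bounded, verifying Assumption~\ref{assumption: DT Stability isL} for each adversary class.

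For the sharper Lipschitz-only bound $L_V \beta B_X(\rho + \gamma \varepsilon + \eta^2) + \nu$, which is strictly tighter than the baseline route in the saturation regime $\gamma\varepsilon \nearrow 1-\rho$, I would use a separate argument exploiting the nonnegativity $V \geq 0$ of candidate Lyapunov functions. Since $-\eta^2 V(\tvarphi_t) \leq 0$, one has $\tildh(\xi, V) + \nu \leq L_V \sup_t \|\tvarphi_{t+1}\|_2$; combined with the induction $\|\tvarphi_t\|_2 \leq \beta B_X(\rho + \gamma\varepsilon)^t$ and the trivial lower bound $h(\xi, V) \geq -\eta^2 V(\varphi_0) \geq -\eta^2 L_V B_X$, this yields the stated inequality. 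Plugging the three resulting bounds on $|\tildh - h|$ into~\eqref{eq: rad complex tildh - h} concludes the proof.

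I expect the main obstacle to be handling the discrete Gr\"onwall-type self-referential inequality cleanly and identifying the correct discrete replacement for the continuous-time maximizer $\sup_t t e^{-\rho t} = 1/(e\rho)$ as $\sup_t t \rho^{t-1} = 1/(e\rho\log(\rho^{-1}))$; this factor is what distinguishes the discrete constants from their continuous counterparts. A secondary subtlety is choosing between the baseline $(1+\eta^2)L_V \sup\|\tvarphi - \varphi\|_2$ bound and the sharper $V \geq 0$ bound so as to recover the tightest stated constant in each case.
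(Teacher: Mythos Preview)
Your proposal is correct, and for the norm-bounded and combined adversaries it follows the paper's argument essentially verbatim: reduce via \eqref{eq: rad complex tildh - h} to a uniform bound on $|\tildh-h|$, apply Lipschitzness of $V$ to get the $(1+\eta^2)L_V\sup_t\|\tvarphi_t-\varphi_t\|_2+\nu$ estimate, and control the deviation by the E$\delta$ISS inequality together with a fixed-point step and the maximizer $\sup_t t\rho^{t-1}=1/(e\rho\log(\rho^{-1}))$.

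Where you genuinely diverge from the paper is the Lipschitz-only case. The paper does \emph{not} use $V\geq 0$; instead it unrolls the recursion $\|\tvarphi_t-\varphi_t\|_2\leq\gamma\varepsilon\sum_k\rho^{t-1-k}\|\tvarphi_k-\varphi_k\|_2+\gamma\varepsilon\beta\|\xi\|_2 t\rho^{t-1}$ all the way down, counts the resulting nested sums via a balls-and-bins identity $\sum_{k_1}\cdots\sum_{k_j}1=\binom{t}{j}$, and recognizes the answer as the binomial expansion $\beta\|\xi\|_2(\rho+\gamma\varepsilon)^t$. It then plugs this time-dependent deviation bound into the finer two-term estimate $L_V\|\tvarphi_{t+1}-\varphi_{t+1}\|_2+\eta^2 L_V\|\tvarphi_t-\varphi_t\|_2$, which at $t=0$ gives the stated constant $L_V\beta B_X(\rho+\gamma\varepsilon+\eta^2)$. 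Your route---decoupling $\tildh$ and $h$ via $V\geq 0$ and running a direct induction on $\|\tvarphi_t\|_2$---is considerably more elementary and sidesteps the combinatorics entirely; indeed, once you have $\|\tvarphi_t\|_2\leq\beta\|\xi\|_2(\rho+\gamma\varepsilon)^t$ (which follows from a one-line geometric-sum induction using $\beta\geq 1$), substituting back into $\|\tvarphi_t-\varphi_t\|_2\leq\gamma\varepsilon\sum_k\rho^{t-1-k}\|\tvarphi_k\|_2$ recovers the paper's deviation bound without any balls-and-bins argument. The paper's approach buys an intermediate result on $\|\tvarphi_t-\varphi_t\|_2$ that does not depend on nonnegativity of $V$; your approach buys a much shorter proof.
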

We note that the necessary condition that $\rho + \gamma \varepsilon < 1$ for the Lipschitz and combined adversaries is nicely analogous to the continuous-time case where we needed $\gamma\varepsilon < \rho$; in both cases, the adversary cannot be powerful enough to de-stabilize the system. One can consider the scalar system $x_{t+1} = \rho x_t$, $0 < \rho < 1$ and the adversary $\delta(x) = \varepsilon x$ to see why this condition cannot be loosened in general. We now provide the proof to Theorem \ref{thm: DT edISS rad complexity bound}.

\begin{proof}
We observe that for an arbitrary perturbation tube $\Delta(\xi)$
\begin{align*}
    \tildh(\xi, V) - h(\xi, V) &= \sup_{\tvarphi \in \Delta(\xi)} \sup_{t \leq T}\; V(\tvarphi_{t+1}) - \eta^2 V(\tvarphi_t)  - \nu  - \sup_{t \leq T} \paren{V(\varphi_{t+1}) - \eta^2 V(\varphi_t)} \\
    &\leq \sup_{\tvarphi \in \Delta(\xi)} \sup_{t \leq T}\; V(\tvarphi_{t+1}) - V(\varphi_{t+1}) - \eta^2 \paren{V(\tvarphi_{t}) - V(\varphi_t)} - \nu\\
    &\leq \sup_{\tvarphi \in \Delta(\xi)} \sup_{t \leq T}\; (1 + \eta^2)L_V \norm{\tvarphi_t - \varphi_t}_2  - \nu,
\end{align*}
and thus
\begin{equation}\label{eq: DT tildh - h bound}
    \abs{\tildh(\xi, V) - h(\xi, V)} \leq \sup_{\tvarphi \in \Delta(\xi)} \sup_{t \leq T}\; (1 + \eta^2)L_V \norm{\tvarphi_t - \varphi_t}_2  + \nu.
\end{equation}
Thus, it suffices to establish bounds on $\norm{\tvarphi_t - \varphi_t}_2$.

\begin{itemize}
\item \textbf{Norm-bounded Adversary} $\Delta_{\varepsilon}^u$: from E$\delta$ISS, we know that the deviation can be bounded by:
\begin{align*}
    \norm{\tvarphi_t - \varphi_t}_2 &\leq \beta \rho^{t} \norm{\tvarphi_0 - \varphi_0}_2  + \gamma \sum_{k=0}^{t-1} \rho^{t-1-k} \norm{\delta_k}_2 \\
    &\leq \gamma \sum_{k=0}^{t-1} \rho^{t-1-k} \varepsilon \\
    &\leq \frac{\gamma \varepsilon}{1 - \rho}.
\end{align*}
Plugging this back into~\eqref{eq: DT tildh - h bound}, we get
\begin{align*}
    \abs{\tildh(\xi, V) - h(\xi, V)} &\leq (1 + \eta^2)L_V\frac{\gamma \varepsilon}{1 - \rho}  + \nu.
\end{align*}

\item \textbf{Lipschitz Adversary} $\Delta_{\varepsilon}^x$: using a more careful analysis, we can get a finer bound in the discrete-time setting than the continuous-time setting, where we get an explicit time-dependent upper bound on the deviation $\norm{\tvarphi_t - \varphi_t}_2$. From E$\delta$ISS, observe that for any $k \leq T$,
\begin{align}
    \norm{\tvarphi_k - \varphi_k}_2 &\leq \gamma \sum_{i = 0}^{k-1} \rho^{k-1-i}\norm{\delta(\tvarphi_i)}_2 \nonumber \\
    &\leq \gamma\varepsilon \sum_{i = 0}^{k-1} \rho^{k-1-i} \norm{\tvarphi_i}_2 \nonumber \\
    &\leq \gamma\varepsilon \sum_{i = 0}^{k-1} \rho^{k-1-i} \paren{\norm{\tvarphi_i - \varphi_i}_2 + \norm{\varphi_i}_2} \nonumber \\
    &\leq \gamma\varepsilon \sum_{i = 0}^{k-1} \rho^{k-1-i} \norm{\varphi_i}_2 + \gamma\varepsilon \sum_{i = 0}^{k-1} \rho^{k-1-i} \norm{\tvarphi_i - \varphi_i}_2 \nonumber \\
    &\leq \gamma\varepsilon \sum_{i=0}^{k-1}\rho^{k-1} \beta\norm{\xi}_2  + \gamma\varepsilon \sum_{i = 1}^{k-1} \rho^{k-1-i} \norm{\tvarphi_i - \varphi_i}_2. \label{eq: EISS tvarphi_t varphi_t difference}
\end{align}
We keep the sum in the first term to keep the our later algebraic manipulations clear. We also observe we can move the starting index of the second sum from $0$ to $1$, since $\tvarphi_0 - \varphi_0 = 0$. Now fixing any timestep $t \leq T$ for $t \geq 2$, we apply \eqref{eq: EISS tvarphi_t varphi_t difference} recursively:
\begin{align*}
    \norm{\tvarphi_t - \varphi_t}_2 &\leq \gamma\varepsilon \sum_{k_1=0}^{t-1}\rho^{t-1} \beta\norm{\xi}_2  + \gamma\varepsilon \sum_{k_1 = 1}^{t-1} \rho^{t-1-k_1} \norm{\tvarphi_{k_1} - \varphi_{k_1}}_2 \\
    &\leq \gamma\varepsilon \sum_{k_1=0}^{t-1}\rho^{t-1} \beta\norm{\xi}_2  \\
    &\quad + \gamma\varepsilon \sum_{k_1 = 1}^{t-1} \rho^{t-1-k_1} \paren{ \gamma \varepsilon \sum_{k_2 = 0}^{k_1 - 1} \rho^{k_1 - 1} \beta \norm{\xi}_2 + \gamma \varepsilon \sum_{k_2 = 1}^{k_1 - 1} \rho^{k_1 - k_2 - 1} \norm{\tvarphi_{k_2} - \varphi_{k_2}}_2 } \\
    &\leq \gamma\varepsilon \sum_{k_1=0}^{t-1}\rho^{t-1} \beta\norm{\xi}_2  + \paren{\gamma\varepsilon}^2 \sum_{k_1 = 1}^{t-1} \sum_{k_2 = 0}^{k_1 - 1}\rho^{t-2} \beta \norm{\xi}_2 \\
    &\quad + \paren{\gamma \varepsilon}^2 \sum_{k_1 = 1}^{t-1} \sum_{k_2 = 1}^{k_1 - 1} \rho^{t - 2 - k_2} \norm{\tvarphi_{k_2} - \varphi_{k_2}}_2 \\
    &\leq \gamma\varepsilon \sum_{k_1=0}^{t-1}\rho^{t-1} \beta\norm{\xi}_2  + \paren{\gamma\varepsilon}^2 \sum_{k_1 = 1}^{t-1} \sum_{k_2 = 0}^{k_1 - 1}\rho^{t-2} \beta \norm{\xi}_2 + \cdots \\
    &\quad + (\gamma\varepsilon)^j \sum_{k_1 = 1}^{t-1} \sum_{k_2 = 1}^{k_1 - 1}\cdots \sum_{k_j = 0}^{k_{j-1} - 1} \rho^{t - j} \beta \norm{\xi}_2 \\
    &\quad + (\gamma\varepsilon)^j \sum_{k_1 = 1}^{t-1} \sum_{k_2 = 1}^{k_1 - 1}\cdots \sum_{k_j = 1}^{k_{j-1} - 1} \rho^{t - j - k_j} \norm{\tvarphi_{k_j} - \varphi_{k_j}}_2.
\end{align*}
This recursive process terminates when there does not exist an assignment of indices $k_1,\dots, k_{j-1}$ such that the summand $\sum_{k_j = 1}^{k_j - 1}$ is non-empty, i.e.\ $1 = k_j > k_{j-1} - 1$. The largest $j$ such that the aforementioned summand is possibly non-empty is when $k_i = k_{i-1} - 1$ for all $i < j$ and $k_1 = t-1$, which implies $k_i = t - i$. In order for $k_j \geq 1$, we must have $k_{j-1} - 1 = t - j \geq 1$, i.e.\ $j \leq t - 1$, and thus our recursive expansion terminates when $j = t-1$. Therefore, continuing our earlier series of inequalities, we have
\begin{align*}
    \norm{\tvarphi_t - \varphi_t}_2 &\leq \gamma\varepsilon \sum_{k_1=0}^{t-1}\rho^{t-1} \beta\norm{\xi}_2  + \cdots  + (\gamma\varepsilon)^{t-1} \sum_{k_1 = 1}^{t-1} \sum_{k_2 = 1}^{k_1 - 1}\cdots \sum_{k_{t-1} = 0}^{k_{t-2} - 1} \rho^{1 - k_{t-1}} \beta \norm{\xi}_2 \\
    &\quad + (\gamma\varepsilon)^{t-1} \sum_{k_1 = 1}^{t-1} \sum_{k_2 = 1}^{k_1 - 1}\cdots \sum_{k_{t-1} = 1}^{k_{t-2} - 1}  \rho^{t - (t-1) - k_{t-1}} \norm{\tvarphi_{k_{t-1}} - \varphi_{k_{t-1}}}_2 \\
    &= \gamma\varepsilon \sum_{k_1=0}^{t-1}\rho^{t-1} \beta\norm{\xi}_2  + \cdots  \\
    &\quad + (\gamma\varepsilon)^{t-1} \sum_{k_1 = 1}^{t-1} \sum_{k_2 = 1}^{k_1 - 1}\cdots \sum_{k_{t-1} = 0}^{k_{t-2} - 1} \rho^{1 - k_{t-1}} \beta \norm{\xi}_2 + (\gamma \varepsilon)^{t-1} \norm{\tvarphi_1 - \varphi_1}_2 \\
    &\leq \gamma\varepsilon \sum_{k_1=0}^{t-1}\rho^{t-1} \beta\norm{\xi}_2  + \cdots \\
    &\quad + (\gamma\varepsilon)^{t-1} \sum_{k_1 = 1}^{t-1} \sum_{k_2 = 1}^{k_1 - 1}\cdots \sum_{k_{t-1} = 0}^{k_{t-2} - 1} \rho^{1 - k_{t-1}} \beta \norm{\xi}_2 + (\gamma \varepsilon)^{t-1} \gamma\varepsilon\rho\beta\norm{\xi}_2 \\
    &\leq \beta \norm{\xi}_2 \sum_{j=1}^{t-1} (\gamma \varepsilon)^{j} \rho^{t-j} \paren{\sum_{k_1 = 1}^{t-1} \cdots \sum_{k_{j-1} = 1}^{k_{j-2} -1} \sum_{k_j = 0}^{k_{j-1} - 1} 1 } + \beta\norm{\xi}_2 (\gamma \varepsilon)^{t} \rho \\
    &\leq \beta \norm{\xi}_2 \sum_{j=1}^{t-1} (\gamma \varepsilon)^{j} \rho^{t-j} \paren{\sum_{k_1 = 1}^{t-1} \cdots \sum_{k_{j-1} = 1}^{k_{j-2} -1} \sum_{k_j = 0}^{k_{j-1} - 1} 1 } + \beta\norm{\xi}_2 (\gamma \varepsilon)^t + \beta\norm{\xi}_2 \rho^t.
\end{align*}
Now it remains to determine the value of $\sum_{k_1 = 1}^{t-1} \cdots \sum_{k_{j-1} = 1}^{k_{j-2} -1} \sum_{k_j = 0}^{k_{j-1} - 1} 1$. Observe the sum is only non-empty if for each $1 \leq i \leq j$, $k_{i-1} - 1 - k_i \geq 0$, where we define $k_0 = t$. Let us define the variables $c_i = k_{i-1} - k_i \geq 1$ for $i = 1,\dots, j$, and we define $c_{j+1} := k_j - 0 = k_j \geq 0$. The tuple $(c_1,\dots,c_{j+1})$ thus satisfies $\sum_{i=1}^{j} c_i = \sum_{i=1}^{j+1} k_{i-1} - k_i = t - 0 = t$. Therefore, the number of terms in the nested summand is equal to the number of integer tuples $(c_1,\dots,c_{j+1})$, where $c_1,\dots,c_j$ are positive and $c_{j+1} \geq 0$, that sum up to $t$, which in turn can be transformed into a balls-and-bins problem where we have $t$ total balls, and $j+1$ bins, but with the first $j$ bins already containing $1$ ball. Thus applying the standard balls-and-bins formula for $t-j$ balls and $j+1$ bins, we get
\begin{align*}
    \sum_{k_1 = 1}^{t-1} \cdots \sum_{k_{j-1} = 1}^{k_{j-2} -1} \sum_{k_j = 0}^{k_{j-1} - 1} 1 &= \binom{(t-j) + (j+1) - 1}{(j+1) - 1} = \binom{t}{j}.
\end{align*}
Plugging this back into the last line of our series of inequalities, we get
\begin{align*}
    \norm{\tvarphi_t - \varphi_t}_2 &\leq \beta \norm{\xi}_2 \sum_{j=1}^{t-1} (\gamma \varepsilon)^{j} \rho^{t-j} \paren{\sum_{k_1 = 1}^{t-1} \cdots \sum_{k_{j-1} = 1}^{k_{j-2} -1} \sum_{k_j = 0}^{k_{j-1} - 1} 1 } + \beta\norm{\xi}_2 (\gamma \varepsilon)^t + \beta\norm{\xi}_2 \rho^t\\
    &= \beta\norm{\xi}_2 \sum_{j=1}^{t-1}(\gamma \varepsilon)^{j} \rho^{t-j} \binom{t}{j} + \beta\norm{\xi}_2 (\gamma \varepsilon)^t + \beta\norm{\xi}_2 \rho^t \\
    &= \beta\norm{\xi}_2 \sum_{j=0}^{t} \binom{t}{j} (\gamma \varepsilon)^{j} \rho^{t-j} \\
    &= \beta\norm{\xi}_2 \paren{\rho + \gamma \varepsilon}^t.
\end{align*}
This gives us the bound:
\begin{align*}
    \abs{\tildh(\xi, V) - h(\xi, V)} &\leq L_V \beta B_X (\rho + \gamma\varepsilon + \eta^2) + \nu.
\end{align*}

\item \textbf{Combined Adversary} $\Delta_{\varepsilon_x, \varepsilon_u}^{x,u}$: from $(\beta, \rho, \gamma)$-E$\delta$ISS, we have
\begin{align*}
    \norm{\tvarphi_t - \varphi_t}_2 &\leq \gamma \sum_{k=0}^{t-1} \rho^{t-1-k} \norm{\delta^x(\tvarphi_k) + \delta^u_t}_2 \\
    &\leq\gamma \sum_{k=0}^{t-1} \rho^{t-1-k} \norm{\delta^x(\tvarphi_k)}_2 + \gamma \sum_{k=0}^{t-1} \rho^{t-1-k} \norm{\delta^u_t}_2 \\
    &\leq \gamma\sum_{k=0}^{t-1} \rho^{t-1-k} \norm{\delta^x(\tvarphi_k)}_2 + \gamma \frac{1}{1-\rho} \varepsilon_u \\
    &\leq \gamma \varepsilon_x \sum_{k=0}^{t-1}\rho^{t-1-k} \paren{\norm{\tvarphi_k - \varphi_k}_2 + \norm{\varphi_k}_2} + \gamma \frac{1}{1 - \rho} \varepsilon_u\\
    &\leq \gamma \varepsilon_x \paren{\max_t \norm{\tvarphi_t - \varphi_t}_2} \frac{1}{1-\rho} + \gamma \varepsilon_x \beta t \rho^{t-1} \norm{\xi}_2 + \gamma \frac{1}{1- \rho} \varepsilon_u.
\end{align*}
Similarly taking a maximum with respect to $t$ on both sides, we get
\begin{align*}
    \max_t \norm{\tvarphi_t - \varphi_t}_2 &\leq \gamma \varepsilon_x \paren{\max_t \norm{\tvarphi_t - \varphi_t}_2} \frac{1}{1-\rho} + \max_t \gamma \varepsilon_x \beta t \rho^{t-1} \norm{\xi}_2 + \gamma \frac{1}{1 - \rho}\varepsilon_u.
\end{align*}
It is straightforward to compute $\max_t t \rho^{t-1} = \frac{1}{e \rho\log(\rho^{-1})}$. Thus, rearranging some terms, and assuming that $\rho + \gamma \varepsilon_x < 1$, we get
\begin{align*}
    \max_t \norm{\tvarphi_t - \varphi_t}_2 &\leq \frac{1 - \rho}{1 - (\rho + \gamma \varepsilon_x)} \gamma \paren{\frac{\beta}{e \rho \log\paren{\rho^{-1}}} \norm{\xi}_2 \varepsilon_x + \frac{1}{1 - \rho}\varepsilon_u  },
\end{align*}
which yields our desired bound.

\end{itemize}

\end{proof}

\end{document}